\documentclass[11pt]{article}

\usepackage{dsfont}

\def\colorful{0}

\oddsidemargin=-0.1in \evensidemargin=-0.1in \topmargin=-.5in
\textheight=9in \textwidth=6.5in
\parindent=18pt

\usepackage{amsthm,amsfonts,amsmath,amssymb,epsfig,color,float,graphicx,verbatim, enumitem}
\usepackage{multirow}
\usepackage{algorithm}
\usepackage[noend]{algpseudocode}

\newif\ifhyper\IfFileExists{hyperref.sty}{\hypertrue}{\hyperfalse}
\hypertrue
\ifhyper\usepackage{hyperref}\fi

\usepackage{enumitem}

\usepackage{framed}
\usepackage{nicefrac}

\def\nnewcolor{1}
\ifnum\nnewcolor=1

\fi
\ifnum\nnewcolor=0

\fi

\ifnum\colorful=1
\newcommand{\new}[1]{{\color{red} #1}}

\else
\newcommand{\new}[1]{{#1}}

\fi

\newtheorem{theorem}{Theorem}[section]
\newtheorem{question}{Question}[section]

\newtheorem{cond}[theorem]{Condition}
\newtheorem{lemma}[theorem]{Lemma}
\newtheorem{informal theorem}[theorem]{Theorem (informal statement)}

\newtheorem{proposition}[theorem]{Proposition}
\newtheorem{corollary}[theorem]{Corollary}
\newtheorem{claim}[theorem]{Claim}
\newtheorem{fact}[theorem]{Fact}

\newtheorem{remark}[theorem]{Remark}

\theoremstyle{definition}
\newtheorem{definition}[theorem]{Definition}
\newcommand{\eqdef}{\stackrel{{\mathrm {\footnotesize def}}}{=}}


\newcommand{\p}{\mathbf{P}}

\newcommand{\R}{\mathbb{R}}

\newcommand{\Z}{\mathbb{Z}}
\newcommand{\N}{\mathbb{N}}
\newcommand{\E}{\mathbf{E}}
\newcommand{\eps}{\epsilon}

\newcommand{\pr}{\mathbf{Pr}}
\renewcommand{\Pr}{\mathbf{Pr}}
\newcommand{\poly}{\mathrm{poly}}


\newcommand{\sgn}{\mathrm{sign}}
\newcommand{\sign}{\mathrm{sign}}

\newcommand{\opt}{\mathrm{OPT}}
\newcommand{\D}{\mathcal{D}}

\title{Near-Optimal Statistical Query Hardness \\ of Learning Halfspaces with Massart Noise}

\author{
Ilias Diakonikolas\thanks{Supported by NSF Medium Award CCF-2107079,
NSF Award CCF-1652862 (CAREER), a Sloan Research Fellowship, and
a DARPA Learning with Less Labels (LwLL) grant.}\\
University of Wisconsin-Madison\\
{\tt ilias@cs.wisc.edu}\\
\and
Daniel M. Kane\thanks{Supported by NSF Medium Award CCF-2107547,
NSF Award CCF-1553288 (CAREER), and a Sloan Research Fellowship.}\\
University of California, San Diego\\
{\tt dakane@cs.ucsd.edu}\\
}

\begin{document}

\maketitle

\begin{abstract}
We study the problem of PAC learning halfspaces with Massart noise. 
Given labeled samples $(x, y)$
from a distribution $D$ on $\R^{d} \times \{ \pm 1\}$ 
such that the marginal $D_x$ on the examples is arbitrary 
and the label $y$ of example $x$ is generated from the target halfspace 
corrupted by a Massart adversary with flipping probability $\eta(x) \leq \eta \leq 1/2$,
the goal 
is to compute a hypothesis with small misclassification error.
The best known $\poly(d, 1/\eps)$-time algorithms for this problem 
achieve error of $\eta+\eps$, which can be far from the optimal bound of $\opt+\eps$, 
where $\opt = \E_{x \sim D_x} [\eta(x)]$. 
While it is known that achieving $\opt+o(1)$ error requires super-polynomial time 
in the Statistical Query model, a large gap remains between 
known upper and lower bounds.

In this work, we essentially characterize
the efficient learnability of Massart halfspaces in the Statistical Query (SQ) model. 
Specifically, we show that no efficient SQ algorithm for learning Massart halfspaces on $\R^d$
can achieve error better than $\Omega(\eta)$, even if $\opt  = 2^{-\log^{c} (d)}$, 
for any universal constant $c \in (0, 1)$. Furthermore, when the noise upper bound $\eta$ is close to $1/2$, 
our error lower bound becomes $\eta  - o_{\eta}(1)$, where the $o_{\eta}(1)$ term goes to $0$ 
when $\eta$ approaches $1/2$.
Our results provide strong evidence that known 
learning algorithms for Massart halfspaces are nearly best possible, 
thereby resolving a longstanding open problem in learning theory.
\end{abstract}

\setcounter{page}{0}

\thispagestyle{empty}

\newpage

\section{Introduction} \label{sec:intro}

\subsection{Background and Motivation} \label{ssec:background}

A halfspace, or Linear Threshold Function (LTF), is any function $f: \R^m \to \{ \pm 1\}$ of the form
$f(x) = \sgn(w \cdot x- \theta)$, for some weight vector $w \in \R^m$ and threshold $\theta \in \R$.
(The function $\sign: \R \to \{ \pm 1\}$ is defined as $\sgn(t)=1$ if $t \geq 0$ and $\sgn(t)=-1$ otherwise.)
Halfspaces are a fundamental class of Boolean functions that have been extensively studied in
computational complexity and learning theory over several decades~\cite{MinskyPapert:68, Yao:90, GHR:92, CristianiniShaweTaylor:00, AoBF14}. The problem of learning an unknown halfspace is as old
as the field of machine learning, starting with the Perceptron algorithm~\cite{Rosenblatt:58, Novikoff:62},
and has been one of the most influential problems in this field  with techniques
such as SVMs~\cite{Vapnik:98} and AdaBoost~\cite{FreundSchapire:97} coming out of its study.

In the realizable PAC model~\cite{val84}, i.e., when the labels are consistent with the target function,
halfspaces are efficiently learnable via Linear Programming (see, e.g.,~\cite{MT:94}).
In the presence of noisy data, the computational complexity of learning halfspaces depends on
the underlying noise model. Here we study the complexity of learning halfspaces
with Massart noise. In the Massart (or bounded) noise model, the label of each example $x$
is flipped independently with probability $\eta(x) \leq \eta$, for some parameter $\eta \leq 1/2$.
The flipping probability is 
bounded above by $1/2$, but can depend on the example $x$ in a potentially adversarial manner.
The following definition encapsulates the PAC learning problem in this noise model.

\begin{definition}[PAC Learning with Massart Noise] \label{def:massart-learning}
Let $\mathcal{C}$ be a concept class of Boolean-valued functions over $X= \R^m$, $D_{x}$
be a fixed but unknown distribution over $X$, and $0 \leq \eta \leq 1/2$ be the noise parameter.
Let $f: X \to \{\pm1\}$ be the unknown target concept with $f \in \mathcal{C}$.
A {\em Massart example oracle}, $\mathrm{EX}^{\mathrm{Mas}}(f, D_{x}, \eta)$,
works as follows: Each time $\mathrm{EX}^{\mathrm{Mas}}(f, D_{x}, \eta)$ is invoked,
it returns a labeled example $(x, y)$, where $x \sim D_{x}$, $y = f(x)$ with
probability $1-\eta(x)$ and $y = -f(x)$ with probability $\eta(x)$,
for some {\em unknown} function $\eta(x): X \to [0, 1/2]$ 
with $\eta(x)\leq \eta$ for all $x \in X$. 
Let $D$ denote the joint distribution on $(x, y)$ generated by the Massart example oracle.
A PAC learning algorithm is given i.i.d.\ samples from $D$
and its goal is to output a hypothesis $h: X \to \{\pm1\}$ such that with high probability
the error $\pr_{(x, y) \sim D} [h(x) \neq y]$ is as small as possible. We will use
$\opt \eqdef \inf_{g \in \mathcal{C}} \pr_{(x, y) \sim D} [g(x) \neq y]$
to denote the optimal misclassification error.
\end{definition}

A remark is in order about the definition. While the TCS community had only considered
the case that the upper bound $\eta$ on the Massart noise rate is {\em strictly smaller} than $1/2$,
this is not an essential assumption in the model. In fact, the original definition of the Massart model~\cite{Massart2006} 
allows for $\eta = 1/2$. (Note that it is possible that $\eta = 1/2$ while $\opt$ is much smaller.)

The Massart noise model is a natural semi-random input model that was formulated in~\cite{Massart2006}.
An equivalent noise model had already been defined in the 80s
by Sloan and Rivest~\cite{Sloan88, Sloan92, RivestSloan:94, Sloan96}
(under the name ``malicious misclassification noise'') and a very similar definition
had been proposed and studied even earlier by Vapnik~\cite{Vapnik82}.
The {\em sample complexity} of PAC learning halfspaces with Massart noise is well-understood.
For example, it is known (see, e.g.,~\cite{Massart2006}) that for any concept class
$\mathcal{C}$ of VC dimension $d$, $O(d/\eps^2)$ samples
information-theoretically suffice to compute a hypothesis with misclassification error $\opt+ \eps$,
where  $\opt = \E_{x \sim D_{x}} [\eta(x)]$.
This implies that halfspaces on $\R^m$ are learnable in the Massart model with
$O(m/\eps^2)$ samples.

In sharp contrast, our understanding of the {\em algorithmic aspects} of
PAC learning various natural concept classes with Massart noise
is startlingly poor and has remained a tantalizing open problem
in computational learning theory since the 1980s.
In~\cite{Sloan88} (see also~\cite{Sloan92}),
Sloan defined the malicious misclassification noise model
(an equivalent formulation of Massart noise)
and asked  whether there exists an efficient learning algorithm
for Boolean disjunctions, i.e., ORs of Boolean literals --- a very 
special case of halfspaces ---  in this model.
About a decade later, Edith Cohen~\cite{Cohen:97} asked 
the same question for the general class of halfspaces.
The problem remained open, even for weak learning,
and was highlighted in Avrim Blum's FOCS 2003 tutorial~\cite{Blum03}.
Surprisingly, until fairly recently, it was not even known whether 
there exists an efficient algorithm that achieves misclassification error 
$49\%$ for Massart halfspaces with noise rate upper bound of $\eta = 1\%$.

Recent work~\cite{DGT19} made the first algorithmic progress on this learning problem.
Specifically, \cite{DGT19} gave a $\poly(m, 1/\eps)$-time learning algorithm for Massart halfspaces
with error guarantee of $\eta+\eps$, where $\eta$ is the upper bound on the Massart noise rate.
This is an {\em absolute} error guarantee which cannot be improved  in general ---
since it may well be the case that $\opt = \eta$ (this in particular happens when $\eta(x) = \eta$ for all $x \in X$).
Motivated by \cite{DGT19}, more recent work~\cite{DiakonikolasIKL21} gave an efficient boosting algorithm, 
achieving error $\eta+\eps$ for any concept class, assuming the existence of a weak learner for the class.

The aforementioned error bound of $\eta+\eps$ can be very far from the
information-theoretically optimum error of $\opt+\eps$. Recall that
$\opt = \E_{x \sim D_{x}} [\eta(x)] \leq \eta$ and it could well be the case that $\opt \ll \eta$.
Follow-up work by~\cite{CKMY20} showed that {\em exact} learning --- specifically,
obtaining error of $\opt+o(1)$, when $\opt$ is close to $1/2$ ---
requires super-polynomial time in the Statistical Query (SQ) model of~\cite{Kearns:98}.
The latter SQ lower bound is very fragile in the sense that it does not even rule out {\em any} 
constant factor approximation algorithm for the problem, 
i.e., a $\poly(m, 1/\eps)$-time learning algorithm with error $C \cdot \opt+\eps$, 
for a universal constant $C>1$.

The aforementioned progress notwithstanding, a very large gap remains in our understanding of the
efficient learnability of halfspaces in the presence of Massart noise. In particular, 
prior to the current work, the following questions remained open.
\begin{question} \label{q:open}
Is there an efficient learning algorithm for Massart halfspaces achieving 
a {\em relative} error guarantee? Specifically, if $\opt \ll \eta$ is it possible to efficiently achieve 
error  significantly better than $\eta$? More generally, what is the best error (as a function of $\opt$ and $\eta$) 
that can be achieved in polynomial time?
\end{question}

We emphasize here that, throughout this work, we focus on {\em improper learning},
where the learning algorithm is allowed to output any polynomially evaluatable
hypothesis.

\medskip

\noindent In this paper, {\em we essentially resolve the efficient PAC learnability 
of Massart halfspaces in the SQ model.}
Specifically, we prove a near-optimal super-polynomial SQ lower bound for this problem, 
which provides {\em strong evidence that known efficient algorithms are nearly best possible.}

Before we formally state our contributions,
we require some background on SQ algorithms.

\paragraph{Statistical Query Model}
Statistical Query (SQ) algorithms are the class of algorithms
that are only allowed to query expectations of bounded functions 
of the underlying distribution rather than directly access samples.
The SQ model was introduced by Kearns~\cite{Kearns:98} in the context of supervised learning
as a natural restriction of the PAC model~\cite{val84} and has been extensively studied
in learning theory. A recent line of work~\cite{FGR+13, FeldmanPV15, FeldmanGV17, Feldman17}
generalized the SQ framework for search problems over distributions. The reader
is referred to~\cite{Feldman16b} for a survey.

One can prove unconditional lower bounds on the complexity of SQ algorithms
via a notion of {\em Statistical Query dimension}. 
Such a complexity measure was introduced in~\cite{BFJ+:94}
for PAC learning of Boolean functions and has been generalized 
to the unsupervised setting in~\cite{FGR+13, Feldman17}.
A lower bound on the SQ dimension of a learning problem provides an unconditional lower bound
on the complexity of any SQ algorithm for the problem.

The class of SQ algorithms is fairly broad:  a wide range of known algorithmic techniques
in machine learning are known to be implementable in the SQ model.
These include spectral techniques, moment and tensor methods,
local search (e.g., Expectation Maximization), and many others
(see, e.g.,~\cite{Chu:2006, FGR+13, FeldmanGV17}).
In the context of PAC learning classes of Boolean functions (the topic of this paper),
with the exception of learning algorithms using Gaussian elimination
(in particular for the concept class of parities, see, e.g.,~\cite{BKW:03}),
all known algorithms with non-trivial performance guarantees
are either SQ or are implementable using SQs. Finally, we acknowledge very
recent work~\cite{BBHLS20} which established a close connection
between the SQ model and low-degree polynomial tests under certain assumptions.

\subsection{Our Contributions} \label{ssec:results}

Our main result shows that any efficient (i.e., using polynomially 
many queries of inverse-polynomial accuracy) SQ learning algorithm for Massart halfspaces 
on $\R^m$ cannot obtain error better than $\Omega(\eta)$, even if the optimal error
is as small as $\opt = 2^{-\log^c(m)}$, for any constant $c \in (0, 1)$. This result rules out
even very weak relative approximations to the optimal value.

In more detail, we establish the following theorem:

\begin{theorem}[Main Result] \label{thm:main-inf}
For any universal constants $c, c'$ with $0< c < 1$ and $0< c'<1-c$, the following holds.
For any sufficiently large positive integer $m$ and any $0 < \eta <1/2$, 
there is no SQ algorithm that PAC learns the class of halfspaces in $\R^m$ with $\eta$-Massart noise 
to error better than $\Omega(\eta)$ using at most $\exp(\log^{1+c}(m))$ queries of accuracy 
no better than $\exp(-\log^{1+c}(m))$. 
This holds even if the optimal classifier has error $\opt = \exp(-\log^{c'}(m))$.
\end{theorem}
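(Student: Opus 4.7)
The plan is to apply the Non-Gaussian Component Analysis (NGCA) framework for SQ lower bounds (as developed in \cite{DKS17, FGR+13}) to a carefully engineered one-dimensional hard instance. The first step is to construct a joint distribution $A$ on $\R \times \{\pm 1\}$ with three properties: (a) its $x$-marginal agrees with $\calN(0,1)$ in its first $k = \Theta(\log^{1+c}(m))$ moments; (b) the conditional of $y$ given $x$ corresponds to a valid Massart model for the halfspace $\sgn(x)$, i.e., there is a pointwise bound $\eta(x)\le \eta$ on the flipping probability, yet $\opt_A = \E_A[\eta(x)]\le \exp(-\log^{c'}(m))$; and (c) the mixed ``signed'' moments vanish, $\E_A[p(x)\, y] = 0$ for every polynomial $p$ of degree less than $k$. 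Property (c) is the critical ingredient that will ``hide'' the halfspace normal from any SQ test that exploits low-degree statistics, and it is what will allow the error lower bound rather than mere identifiability hardness.

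The second step is the NGCA embedding: for each unit vector $v\in\R^m$, let $D_v$ place the pair $(v\cdot x, y)$ according to $A$ and make the projection of $x$ onto $v^\perp$ an independent standard Gaussian on $\R^{m-1}$. Each $D_v$ is then a valid Massart-noisy halfspace instance with normal $v$ and optimal error at most $\exp(-\log^{c'}(m))$. A Hermite-expansion computation using (a) and (c) gives the pairwise correlation bound
\begin{equation*}
\chi^2(D_v, D_{v'};\, D_{\mathrm{ref}}) \;\le\; C\cdot |\langle v, v'\rangle|^{k},
\end{equation*}
where $D_{\mathrm{ref}}$ is $\calN(0,I_m)$ paired with a uniform random label. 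The third step extracts an exponentially large packing $\mathcal{S}$ of $\exp(m^{\Omega(1)})$ unit vectors with $|\langle v, v'\rangle| \le m^{-1/2+o(1)}$, so the pairwise correlations are at most $m^{-k/2+o(k)}$. The generic SQ dimension lower bound of \cite{FGR+13} then rules out distinguishing a random $D_v$ from $D_{\mathrm{ref}}$ using $\exp(\log^{1+c}(m))$ queries of tolerance $\exp(-\log^{1+c}(m))$. Finally, a standard learning-to-testing reduction transfers this to a learning lower bound: under $D_{\mathrm{ref}}$ the label is independent of $x$ and every hypothesis has error $1/2$, whereas any SQ learner achieving error better than $\Omega(\eta)$ on every $D_v$ would (after evaluating the output hypothesis by an extra SQ) produce a distinguisher, contradicting the SQ bound.

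The technical heart, and the expected main obstacle, is the construction of $A$ in step (a)--(c). There is a genuine tension: shrinking $\opt$ forces most of the $A$-mass to lie in a ``clean'' region where $\eta(x)\approx 0$ and $y\equiv\sgn(x)$ is essentially deterministic; but cancelling $\E_A[p(x)y]$ against all low-degree $p$ normally requires an oscillating signed measure, which cannot directly correspond to a Massart model (the flipping probability must stay in $[0,\eta]$ pointwise). My approach is to seek $A$ as a two-component mixture: on a ``noisy'' region of tiny probability $\exp(-\log^{c'}(m))$, put $\eta(x)=\eta$ so that the contribution to the mixed moments is heavily damped toward zero in a controlled signed way; on a ``clean'' region of large probability, put $\eta(x)=0$ so that $y=\sgn(x)$ deterministically. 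The weights and supports of these two pieces must be chosen so that simultaneously (i) the net $x$-marginal matches $\calN(0,1)$ to degree $k$, (ii) the signed mixed moments cancel to degree $k$, and (iii) the clean-region mass is $1-\exp(-\log^{c'}(m))$ to control $\opt$. Balancing these three constraints at the target scale $k\asymp \log^{1+c}(m)$ is precisely what yields the near-optimal parameter regime claimed by the theorem; I expect this to reduce to a moment-problem whose feasibility I would verify by a perturbation argument around an explicit symmetric construction (e.g., using shifted/scaled Gaussians or their discrete analogues on a carefully chosen grid).
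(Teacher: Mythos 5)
There is a genuine gap in your proposal, concentrated in step~(1): your condition~(c), namely that $\E_A[p(x)\,y]=0$ for every polynomial $p$ of degree less than $k$, is \emph{incompatible} with the Massart condition (b). Taking $p(x)=x$ and recalling that in a Massart model for the halfspace $\sgn(x)$ one has $\E[y\mid x]=\sgn(x)(1-2\eta(x))$ with $\eta(x)\le 1/2$ pointwise, we get $\E_A[xy]=\E_A\bigl[|x|(1-2\eta(x))\bigr]\ge 0$, with equality only if $\eta(x)=1/2$ almost everywhere off the origin --- i.e.\ only if $\opt=1/2$. So every nontrivial Massart halfspace instance has $\E_A[xy]>0$, and no mixture, two-component or otherwise, can repair this: the contributions from every region (clean or noisy) have the same sign. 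This also kills the more general scheme: if you replace (c) by the condition the NGCA framework actually needs --- that both conditionals $(X\mid Y=1)$ and $(X\mid Y=-1)$ approximately match Gaussian moments --- then $\sgn(x)$ can be $\eps$-sandwiched between low-degree polynomials (a la Diakonikolas, Gopalan, Jaiswal, Servedio, Viola), forcing $\E[\sgn(X)\mid Y=1]\approx\E[\sgn(X)\mid Y=-1]$, which again contradicts the correlation between $Y$ and $\sgn(X)$ that Massart noise with $\eta<1/2$ guarantees. In short, there is no one-dimensional halfspace instance with both the required moment-matching and small $\opt$.

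The missing idea is to prove the lower bound for degree-$d$ \emph{polynomial threshold functions} under the Gaussian (with $d$ super-constant), and then lift to halfspaces in $\R^M$ via the degree-$2d$ Veronese map $V_{2d}:\R^m\to\R^M$, $M=\binom{2d+m}{m}$. For PTFs $f=\sgn(p)$ the signed moment $\E[p(X)Y]$ is still necessarily positive, but this is consistent with the conditionals approximately matching Gaussian moments: one exploits points where $|p(X)|$ is very large so that a tiny amount of mass on the ``wrong'' side can still contribute the correct moments. This is precisely what the construction must engineer, and your proposed two-component mixture is far too coarse for it. The actual construction takes each conditional to be an average of discrete Gaussians $G_{\sigma,\theta}$ in which \emph{both} the spacing $\sigma$ and the offset $\theta$ vary together, so that the support is a union of intervals whose width grows linearly with distance from the origin; near the origin the two conditionals have disjoint supports (giving tiny $\opt$), and far away the intervals merge and the density ratio can be controlled uniformly to respect the Massart bound. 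That co-variation of $\sigma$ and $\theta$ is the key technical device, and it does not fall out of a perturbation around a fixed two-piece construction.
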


Some comments are in order. First, recall that the efficient algorithm of~\cite{DGT19} 
(which can be implemented in the SQ model)
achieves error arbitrarily close to $\eta$. 
Moreover, it is easy to see that the Massart learning problem is computationally easy when $\opt \ll 1/m$. 
As a result, the ``inapproximability gap'' of $\Omega(\eta)$ versus $2^{-\log^c(m)}$ established 
by Theorem~\ref{thm:main-inf} is essentially best possible (up to the universal constant in the $\Omega(\cdot)$).
For a more detailed statement, see Theorem~\ref{thm:main-formal}. 

\begin{remark} \label{rem:optimal-error-large-eta}
{\em When the Massart noise rate upper bound $\eta$ approaches $1/2$, 
we can replace the lower bound of $\Omega(\eta)$ appearing in Theorem~\ref{thm:main-inf} 
by the sharper lower bound of $\eta  - o_{\eta}(1)$. Here the term $o_{\eta}(1)$ goes to $0$ 
as $\eta$ approaches $1/2$. See Theorem~\ref{thm:main-large-eta} for the statement in this regime.}
\end{remark}

It is worth comparing Theorem~\ref{thm:main-inf} to the hardness result of Daniely~\cite{Daniely16}
for PAC learning halfspaces in the {\em agnostic} model.
Daniely's result is qualitatively similar to our Theorem~\ref{thm:main-inf} with two differences:
(1) The lower bound in~\cite{Daniely16} only applies against the (much more challenging) agnostic model.
(2) In the agnostic setting, it is hard to learn halfspaces within error significantly better than $1/2$,
rather than error $\Omega(\eta)$ in the Massart setting.
Theorem~\ref{thm:main-formal} proves an SQ lower bound for a much more benign semi-random 
noise model at the cost of allowing somewhat better error in polynomial time. \new{We reiterate that
error arbitrarily close to $\eta$ is efficiently achievable for Massart noise~\cite{DGT19}, 
and therefore our hardness gap is nearly best possible.}


\begin{remark}\label{rem:prior-result}
{\em Theorem~\ref{thm:main-inf} strengthens a recent result by the same authors~\cite{DK20-hardness-arxiv} 
which established a weaker inapproximability gap for Massart halfspaces in the SQ model. 
Specifically,~\cite{DK20-hardness-arxiv} showed that no efficient SQ algorithm 
can learn Massart halfspaces to accuracy $1/\mathrm{polylog}(m)$, even when $\opt = 2^{-\log^c(m)}$. 
The main difference between the two results, which leads to the difference in the error bounds, 
lies in the construction of the one-dimensional moment-matching distributions.}
\end{remark}

\subsection{Related and Prior Work} \label{ssec:related}
We have already provided some background on the Massart noise model.
Here we summarize the most relevant literature on learning halfspaces in
related noise models.

\paragraph{Random Classification Noise}

Random Classification Noise (RCN)~\cite{AL88} is the special case of Massart
noise where each label is flipped with probability {\em exactly} $\eta <1/2$.
Halfspaces are known to be efficiently learnable {\em to optimal accuracy}
in the (distribution-independent) PAC model with RCN~\cite{BlumFKV96, BFK+:97}.
In fact, it is well-known that any SQ learning algorithm~\cite{Kearns:98} can be transformed to an
RCN noise tolerant learning algorithm --- a fact that inherently fails in the presence of Massart noise.
Roughly speaking, the ability of the Massart adversary to choose {\em whether} to flip a given label
and, if so, with what probability, makes the algorithmic problem in this model 
significantly more challenging.

\paragraph{Agnostic Learning}
The agnostic model~\cite{Haussler:92, KSS:94} is the strongest noise model in the literature, 
where an adversary is allowed to adversarially corrupt
an arbitrary $\opt<1/2$ fraction of the labels. In the distribution-independent setting,
even {\em weak} agnostic PAC learning of halfspaces (i.e., obtaining a hypothesis with non-trivial accuracy)
is known to be intractable. A long line of work (see, e.g.,~\cite{GR:06, FGK+:06short})
has established NP-hardness of weak agnostic {\em proper} learning.
(See~\cite{Feldman15} for a survey on hardness of proper learning results.)
More recently, \cite{Daniely16} gave super-polynomial lower bounds
for {\em improper} learning, under certain average-case complexity assumptions,
and simultaneously established SQ lower bounds for the problem.
Concretely, \cite{Daniely16} showed that no polynomial-time SQ algorithm for agnostically
learning halfspaces on $\R^m$ can compute a hypothesis with error $1/2-1/m^c$, for some constant $c>0$,
even for instances with optimal error $\opt = 2^{-\log^{1-\nu}(m)}$, for some constant $\nu \in (0, 1/2)$.

Finally, it is worth noting that learning to {\em optimal} accuracy in the agnostic model is known to be computationally 
hard even in the distribution-specific PAC model, and in particular under the Gaussian distribution~\cite{KlivansK14, GGK20-agnostic-SQ, DKZ20, DKPZ21}. However, these distribution-specific hardness 
results are very fragile and do not preclude efficient constant factor approximations. 
In fact, efficient constant factor approximate learners are known for the Gaussian 
and other well-behaved distributions (see, e.g.,~\cite{ABL17, DKS18-nasty}).

\paragraph{Prior SQ Lower Bound for Massart Halfspaces}
\cite{CKMY20} showed an SQ lower bound of $m^{\Omega(\log(1/\eps))}$
for learning halfspaces with Massart to error $\opt+\eps$, when $\opt$ is close to $1/2$.
Specifically,~\cite{CKMY20} observed a connection between SQ learning with Massart noise and the
{\em Correlational Statistical Query (CSQ)} model, a restriction of the SQ model
defined in~\cite{BshoutyFeldman:02} (see also~\cite{Feldman08, Feldman11}).
Given this observation, \cite{CKMY20} deduced their SQ lower bound 
by applying {\em as a black-box} a previously known CSQ lower bound 
by Feldman~\cite{Feldman11}. This approach is inherently limited to exact learning.
Establishing lower bounds for approximate learning requires new ideas.

\paragraph{Efficient Algorithms for Distribution-Specific Learning}
Finally, we note that $\poly(m, 1/\eps)$ time learning algorithms for homogeneous Massart 
halfspaces with optimal error guarantees have been developed 
when the marginal distribution on examples 
is well-behaved~\cite{AwasthiBHU15, AwasthiBHZ16, ZhangLC17, 
YanZ17, Zhang20, DKTZ20, DKTZ20b, DKKTZ20, DKKTZ21b}.
The hardness result obtained in this paper provides additional motivation
for such distributional assumptions. As follows from our inapproximability result, 
without some niceness assumption on the distribution of examples,
obtaining even extremely weak relative approximations to the optimal error 
is hard.

\paragraph{Broader Context}
This work is part of the broader direction of understanding the computational complexity of
robust high-dimensional learning in the distribution-independent setting.
A long line of work, see, e.g.,~\cite{KLS09, ABL17, DKKLMS16, LaiRV16, DKK+17, DKKLMS18-soda,
DKS18-nasty, KlivansKM18, DKS19, DKK+19-sever} and the recent survey~\cite{DK19-survey},
has given efficient robust learners for a range of high-dimensional estimation tasks (both supervised and unsupervised)
in the presence of a small constant fraction of adversarial corruptions.
These algorithmic results inherently rely on the assumption that the clean data
is drawn from a ``well-behaved'' distribution.

On the other hand, the recent work~\cite{DGT19} established that efficient robust learners with non-trivial error guarantees
are achievable {\em even in the distribution-independent setting}, under the more ``benign'' Massart model.
This result provided compelling evidence that there are realistic noise models in which efficient algorithms
are possible without imposing assumptions on the good data distribution.
Conceptually, the result of this paper shows that, even in such semi-random noise models,
there can be strong computational limitations in learnability --- in the sense that
it is computationally hard to achieve even {\em weak relative approximations} to the optimal error.

\subsection{Overview of Techniques} \label{ssec:techniques}

At a high level, our proof leverages the SQ lower bound framework developed in~\cite{DKS17-sq}.
We stress that, while this framework is a key ingredient of our construction, employing it 
in our context requires new conceptual and
technical ideas, as we explain in the proceeding discussion.

Roughly speaking, the prior work \cite{DKS17-sq} established the following generic SQ-hardness result:
Let $A$ be a one-dimensional distribution that matches the first $k$ moments with the standard Gaussian $G$
\new{and satisfying the additional technical condition that its chi-squared norm with $G$ is not too large.}
Suppose we want to distinguish between the standard high-dimensional Gaussian $N(0, I)$ on $\R^m$ and a
distribution $\p^A_v$ that is a copy of $A$ in a random direction \new{$v$} 
and is a standard Gaussian in the orthogonal complement.
Then any SQ algorithm for this hypothesis testing task requires super-polynomial complexity. 
\new{Roughly speaking, any SQ algorithm distinguishing between the two cases requires either
at least $m^{\Omega(k)}$ samples or at least $2^{m^{\Omega(1)}}$ time.}

In the context of the current paper, we will in fact require a generalization of the latter generic result that holds
even if the one-dimensional distribution $A$ {\em nearly} matches the first $k$ moments with $G$.
Furthermore, in contrast to the unsupervised estimation problem studied in \cite{DKS17-sq},
in our context we require a generic statement establishing the SQ-hardness of a {\em binary classification} problem.
Such a statement (Proposition~\ref{prop:generic-sq}) is not hard to derive from the techniques of \cite{DKS17-sq}.

In more detail, Proposition~\ref{prop:generic-sq} shows the following: 
Let $A$ and $B$ be univariate distributions (approximately) matching their first $k$ moments with $G$ 
(and each having not too large chi-squared \new{norm with respect to} $G$)
and let $p\in (0,1)$. We consider the distribution on labeled samples $\p^{A,B,p}_v$
that returns a sample from $(\p^A_v,1)$ with probability $p$ 
and a sample from $(\p^B_v,-1)$ with probability $1-p$.
Given labeled examples from $\p^{A,B,p}_v$, for an unknown direction $v$,
the goal is to output a Boolean-valued hypothesis with small misclassification error.
Note that it is straightforward to obtain error $\min\{p, 1-p \}$
(as one of the two constant functions achieves this). 
We show that obtaining slightly better error is hard in the SQ model.

To leverage the aforementioned result in our circumstances,
we would like to establish the existence of a distribution $(X,Y)$ on $\R \times \{ \pm 1\}$
{\em that corresponds to a halfspace with Massart noise} 
such that both the distribution of $X $ conditioned on $Y=1$ (denoted by $(X \mid Y=1)$)
and the distribution of $X$ conditioned on $Y=-1$ (denoted by $(X \mid Y=-1)$) 
approximately match their first $k$ moments with the standard Gaussian.
{\em Note that $k$ here is a parameter that we would like to make as large as possible. In particular,
to prove a super-polynomial SQ lower bound, we need to be able to make this parameter 
$k$ {\em super-constant} (as a function of the ambient dimension).}

Naturally, a number of obstacles arise while trying to achieve this.
{\em In particular, achieving the above goal directly is provably impossible} for the following reason.
Any distribution $X$ that even approximately matches a {\em constant} number
of low-order moments with the standard Gaussian will satisfy $\E[f(X)] \approx \E[f(G)]$ 
for any halfspace (LTF) $f$.
To see this fact, we can use the known statement (see, e.g.,~\cite{DGJ+:10})
that any halfspace $f$ can be sandwiched between low-degree polynomials 
$f_+\geq f \geq f_-$ with $\E[f_+(G)-f_-(G)]$ small.
This structural result implies that if both conditional distributions $(X \mid Y=1)$ and $(X \mid Y=-1)$
approximately match their low-degree moments with $G$,
then $\E[f(X)|Y=1]$ will necessarily be close to $\E[f(X)|Y=-1]$, 
which cannot hold in the presence of Massart noise.

In order to circumvent this obstacle, we will instead prove a super-polynomial SQ lower bound
against learning degree-$d$ polynomial threshold functions (PTFs) under the Gaussian distribution
with Massart noise, for an appropriate (super-constant) value of the degree $d$.
Since a degree-$d$ PTF on the vector random variable $X \in \R^m$ is equivalent 
to an LTF on $X^{\otimes d}$ --- a random variable in $m^d$ dimensions ---
we will thus obtain an SQ lower bound for the original halfspace Massart learning problem.
We note that a similar idea was used in~\cite{Daniely16} to prove an SQ lower bound 
for the problem of learning halfspaces in the agnostic model.

The next challenge is, of course, to construct the required moment-matching distributions in one dimension.
{\em Even for our reformulated PTF learning problem, it remains unclear whether this is even possible.}
For example, let $f(x)=\sgn(p(x))$ be a degree-$d$ PTF.
Then it will be the case that $\E[p(X)Y] = \E[p(X)f(X)(1-2\eta(X))] = \E[|p(X)|(1-2\eta(X))] >0$.
This holds despite the fact that $\E[p(X) \mid Y=1] \approx \E[p(X) \mid Y=-1] \approx \E[p(G)]$.
If $\E[p(G)]>0$, it will be the case that $\E[p(X) \mid Y=-1]$ will be positive,
despite the fact that the conditional distribution of $X \mid Y=-1$ is almost entirely supported
on the region where $p(X) < 0$. Our construction will thus need to take advantage of finding points
where $|p(X)|$ is very large.


Fortunately for us, something of a miracle occurs here. Consider a discrete univariate Gaussian $G_{\delta}$ 
with spacing \new{$\sigma$} between its values. It is not hard to show that $G_{\delta}$ approximately matches moments
with the standard Gaussian $G$ to error $\exp(-\Omega(1/\new{\sigma}^2))$ (see Lemma~\ref{lem:disc-Gaussian-mm}). 
On the other hand, all but a tiny fraction of the probability mass of $G_{\new{\sigma}}$ is supported 
on $d=\tilde O(1/\new{\sigma})$ points. 
Unfortunately, a discrete Gaussian is not quite suitable for the conditional distributions in our construction, 
as its $\chi^2$ inner product with respect to the standard Gaussian is infinite. 
We can fix this issue by replacing the single discrete Gaussian with an average of discrete Gaussians with different offsets. 
Doing so, we obtain a distribution that nearly-matches many moments with the standard Gaussian
such that all but a small fraction of its mass is supported on a small number of intervals.

As a first attempt, we can let one of our conditional distributions be 
this average of ``offset discrete Gaussians'' described above, 
and the other be a similar average with different offsets. Thus, both conditional distributions nearly-match moments 
with the standard Gaussian and are approximately supported on a small number of (disjoint) intervals. 
This construction actually suffices to prove a lower bound for (the much more challenging) agnostic learning model. 
Unfortunately however, for the Massart noise model, additional properties are needed. 
In particular, for a univariate PTF with Massart noise, it must be the case that except for points $x$ 
in a small number of intervals, we have that $\pr[Y=1 \mid X=x] > \pr[Y=-1 \mid X=x]$; 
whereas in the above described construction 
we have to alternate infinitely many times between $Y=1$ being more likely and $Y=-1$ more likely.

To circumvent this issue, we need the following subtle modification of our construction. 
Let $G_{\new{\sigma},\theta}$ be the discrete Gaussian supported on the points $n\new{\sigma}+\theta$, 
for $n\in \Z$ (Definition~\ref{def:disc-Gauss}). 
Our \new{previous (failed)} construction involved taking an average of 
$G_{\new{\sigma},\theta}$, for some \emph{fixed} $\new{\sigma}$ and $\theta$ varying in some range. 
Our modified construction will involve taking an average of $G_{\new{\sigma},\theta}$, 
where both $\new{\sigma}$ and $\theta$ {\em vary together}. The effect of this feature will be that 
instead of producing a distribution whose support is a set of evenly spaced intervals of the same size, 
the support of our distributions will instead consist of a set of evenly spaced intervals 
whose size grows with the distance from the origin. This means that for points $x$ near $0$, 
the support will essentially be a collection of small, disjoint intervals. 
But when $x$ becomes large enough, these intervals will begin to overlap, 
causing all sufficiently large points $x$ to be in our support. 
By changing the offsets used in defining the conditional distribution for $Y=1$ 
and the conditional distribution for $Y=-1$, we can ensure that for points $x$ with $|x|$ small 
that the supports of the two conditional distributions remain disjoint. This in particular allows us 
to take the optimal error $\opt$ to be very small. 
However, for larger values of $|x|$, the supports become the same. 
Finally, by adjusting the prior probabilities of $Y=1$ and $Y=-1$, 
we can ensure that $\pr[Y=1 \mid X=x] > ((1-\eta)/(\eta)) \, \pr[Y=-1 \mid X=x]$ 
for all points $x$ with $|x|$ sufficiently large. This suffices to show that the underlying
distribution corresponds to a Massart PTF.

\new{
\subsection{Organization} \label{ssec:structure}
The structure of this paper is as follows: 
In Section~\ref{sec:prelims}, we review the necessary background on the Statistical Query model. In Section~\ref{sec:proof-cont}, we prove our SQ lower bounds for Massart halfspaces. Finally, in Section~\ref{sec:conc}, we conclude and suggest a few 
directions for future work.
}

\section{Preliminaries} \label{sec:prelims}

\paragraph{Notation} 
For $n \in \Z_+$, we denote $[n] \eqdef \{1, \ldots, n\}$.
We will use standard notation for norms of vectors and functions,
that will be presented before it is used in subsequent sections.
We use $\E[X]$ for the expectation of random variable $X$ and $\pr[\mathcal{E}]$
for the probability of event $\mathcal{E}$.


\paragraph{Basics on Statistical Query Algorithms.} 
We will use the framework of Statistical Query (SQ) algorithms for problems over distributions
introduced in~\cite{FGR+13}.
We start by defining a decision problem over distributions. 

\begin{definition}[Decision Problem over Distributions] \label{def:decision}
We denote by $\mathcal{B}(\D, D)$ the decision (or hypothesis testing) problem in which 
the input distribution $D'$ is promised to satisfy either (a) $D' = D$ or (b) $D' \in \D$, 
and the goal of the algorithm is to distinguish between these two cases.
\end{definition}

We define SQ algorithms as algorithms that do not have direct access to samples from the distribution, 
but instead have access to an SQ oracle. We consider the following standard oracle.

\begin{definition}[$\mathrm{STAT}$ Oracle]\label{def:stat}
For a tolerance parameter $\tau >0$ and any bounded function $f: \R^n \to [-1, 1]$, 
$\mathrm{STAT}(\tau)$ returns a value $v \in \left[\E_{x \sim D} [f(x)] - \tau,  \E_{x \sim D} [f(x)] + \tau \right]$.
\end{definition}

We note that \cite{FGR+13} introduced another related oracle, 
which is polynomially equivalent to $\mathrm{STAT}$. 
Since we prove super-polynomial lower bounds here, there is no essential distinction between these oracles.
To define the SQ dimension, we need 
the following definitions.

\begin{definition}[Pairwise Correlation] \label{def:pc}
The pairwise correlation of two distributions with probability density functions 
$D_1, D_2 : \R^m \to \R_+$ with respect to a distribution with density $D: \R^m \to \R_+$, 
where the support of $D$ contains the supports of $D_1$ and $D_2$, 
is defined as $\chi_{D}(D_1, D_2) \eqdef \int_{\R^m} D_1(x) D_2(x)/D(x) dx -1$.
\end{definition} 

We remark that when $D_1=D_2$ in the above definition, 
the pairwise correlation is identified with the $\chi^2$-divergence between $D_1$ and $D$, 
i.e.,  $\chi^2(D_1, D) \eqdef \int_{\R^m} D_1(x)^2/D(x) dx -1$. 

\begin{definition} \label{def:uncor}
We say that a set of $s$ distributions $\mathcal{D} = \{D_1, \ldots , D_s \}$ over $\R^m$ 
is $(\gamma, \beta)$-correlated relative to a distribution $D$ if 
$|\chi_D(D_i, D_j)| \leq \gamma$ for all $i \neq j$, and $|\chi_D(D_i, D_j)| \leq \beta$ for $i=j$.
\end{definition}

We are now ready to define our notion of dimension.

\begin{definition}[Statistical Query Dimension] \label{def:sq-dim}
For $\beta, \gamma > 0$, a decision problem $\mathcal{B}(\D, D)$,
where $D$ is a fixed distribution and $\D$ is a family of distributions over $\R^m$, 
let $s$ be the maximum integer such that there exists a finite set of distributions 
$\mathcal{D}_D \subseteq \D$ 
such that $\mathcal{D}_D$ is $(\gamma, \beta)$-correlated relative to $D$ and $|\mathcal{D}_D| \geq s.$ 
We define the {\em Statistical Query dimension} with pairwise correlations $(\gamma, \beta)$ 
of $\mathcal{B}$ to be $s$ and denote it by $\mathrm{SD}(\mathcal{B},\gamma,\beta)$.
\end{definition}

Our proof bounds below the Statistical Query dimension
of the considered learning problem. This implies lower bounds on 
the complexity of any SQ algorithm for the problem using the following standard result.

\begin{lemma}[Corollary 3.12 in~\cite{FGR+13}] \label{lem:sq-from-pairwise} 
Let $\mathcal{B}(\D, D)$ be a decision problem, where $D$ is the reference distribution
and $\mathcal{D}$ is a class of distributions. For $\gamma, \beta >0$, 
let $s= \mathrm{SD}(\mathcal{B}, \gamma, \beta)$. For any $\gamma' > 0,$ any
SQ algorithm for $\mathcal{B}$ requires at least $s \cdot \gamma' /(\beta - \gamma)$ queries to the 
$\mathrm{STAT}(\sqrt{\gamma + \gamma'})$ oracle.
\end{lemma}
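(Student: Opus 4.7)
The plan is to prove the lemma via an adversarial argument: produce an oracle strategy such that, against any SQ algorithm, each query to $\mathrm{STAT}(\sqrt{\gamma + \gamma'})$ can eliminate only a small number of the $s$ candidate distributions from $\mathcal{D}_D$, so the algorithm needs $\Omega(s \gamma' / (\beta-\gamma))$ queries before it can separate $D$ from every $D_i$.

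First I would set up the standard linear-algebraic framework. Write $\tilde h_i(x) := D_i(x)/D(x) - 1$; these lie in $L^2(D)$, satisfy $\E_D[\tilde h_i]=0$, and by Definition~\ref{def:pc} obey $\E_D[\tilde h_i \tilde h_j] = \chi_D(D_i, D_j)$. Hence the $(\gamma,\beta)$-correlation hypothesis translates into $\E_D[\tilde h_i^2] \leq \beta$ and $|\E_D[\tilde h_i \tilde h_j]| \leq \gamma$ for $i \neq j$. For any query $f: \R^m \to [-1,1]$,
\[
\E_{D_i}[f] - \E_D[f] \;=\; \E_D[f \, \tilde h_i],
\]
so the oracle may answer with $\E_D[f]$ whenever $|\E_D[f\tilde h_i]| \leq \sqrt{\gamma + \gamma'}$; such a response is simultaneously consistent with the null hypothesis $D' = D$ and with $D' = D_i$, and therefore cannot distinguish the two.

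The core step is to bound, for a fixed $f$, the size of the "bad" set $S(f) := \{i : \E_D[f\tilde h_i]^2 > \gamma + \gamma'\}$. Set $k := |S(f)|$, let $\epsilon_i := \sgn(\E_D[f\tilde h_i])$ for $i \in S(f)$, and consider the test function $g := \sum_{i \in S(f)} \epsilon_i \tilde h_i$. On one hand, $\E_D[fg] = \sum_{i \in S(f)} |\E_D[f \tilde h_i]| \geq k\sqrt{\gamma+\gamma'}$. On the other hand, by expanding the square and using the correlation bounds,
\[
\E_D[g^2] \;=\; \sum_{i \in S(f)} \E_D[\tilde h_i^2] + \sum_{i \neq j \in S(f)} \epsilon_i \epsilon_j \, \E_D[\tilde h_i \tilde h_j] \;\leq\; k\beta + k(k-1)\gamma.
\]
Cauchy--Schwarz gives $(\E_D[fg])^2 \leq \E_D[f^2]\,\E_D[g^2] \leq k\beta + k(k-1)\gamma$, whence $k^2(\gamma+\gamma') \leq k\beta + k(k-1)\gamma$, which after simplification yields $k \leq (\beta-\gamma)/\gamma'$. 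Thus any single query can rule out at most $(\beta-\gamma)/\gamma'$ of the $s$ distributions in $\mathcal{D}_D$.

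Finally, I would assemble the adversary argument. Run the SQ algorithm against the oracle that always responds with $\E_D[f]$ on query $f$. After $q$ queries $f_1,\dots,f_q$, the set of candidates not yet ruled out is $\mathcal{D}_D \setminus \bigcup_{t \leq q} S(f_t)$, which has size at least $s - q(\beta-\gamma)/\gamma'$. If $q < s\gamma'/(\beta-\gamma)$, there still exists an index $i^\star$ consistent with every response, so the transcript is identical to what the algorithm would see under $D' = D_{i^\star} \in \D$; hence it cannot correctly decide $\mathcal{B}(\D, D)$. The main subtlety to get right is simply the application of Cauchy--Schwarz with the mildly-correlated Gram matrix --- everything else is bookkeeping --- and this is where the $\beta - \gamma$ (rather than $\beta$) in the denominator comes from.
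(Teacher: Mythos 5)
The paper does not prove this lemma; it cites it verbatim as Corollary~3.12 of~\cite{FGR+13} and uses it as a black box. Your blind proof is therefore being compared against the original argument in that reference rather than against anything in this paper, and it matches that argument essentially step for step. The setup with $\tilde h_i = D_i/D - 1$, the identity $\E_{D_i}[f]-\E_D[f]=\E_D[f\tilde h_i]$, the bound on the bad set $S(f)$ via Cauchy--Schwarz applied to the signed combination $g=\sum_{i\in S(f)}\epsilon_i\tilde h_i$ (which is exactly where the $\beta-\gamma$ denominator comes from), and the final adversary/elimination count are all correct and are the standard decomposition used in~\cite{FGR+13}. One small cosmetic point: since the correlations may be negative in general, you should carry the absolute-value signs from Definition~\ref{def:uncor} through the $\E_D[g^2]$ estimate; your use of $\epsilon_i\epsilon_j\,\E_D[\tilde h_i\tilde h_j]\leq\gamma$ already implicitly does this, so the conclusion stands.
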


\section{SQ Hardness of Learning Halfspaces with Massart Noise} \label{sec:proof-cont}
In this section, we prove our SQ lower bounds for Massart halfspaces, 
establishing Theorem~\ref{thm:main-inf}. In more detail, we establish the following
result, which implies Theorem~\ref{thm:main-inf}.

\begin{theorem}[SQ Hardness of Learning Massart Halfspaces on $\R^M$] \label{thm:main-formal}
Let $\opt>0$ and $M \in \Z_+$ be such that
$\log(M)/(\log\log(M))^3$ is at least a sufficiently large constant multiple of $\log(1/\opt)$.
There exists a parameter $\tau \eqdef M^{-\Omega\left( \frac{\log(M)}{{\log\log(M)}^3}/\log(1/\opt) \right)}$
such that no SQ algorithm can learn the class of halfspaces on $\R^M$
in the presence of $\eta$-Massart noise, where $\opt< \eta <1/2$, within
error better than $\Omega(\eta)$ using at most $1/\tau$ queries
of tolerance $\tau$.
This holds even if the optimal binary classifier has misclassification error at most $\opt$.
\end{theorem}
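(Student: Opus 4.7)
The plan is to prove Theorem~\ref{thm:main-formal} by following the roadmap sketched in Section~\ref{ssec:techniques}: combine a generic SQ lower bound for binary classification in the hidden-direction regime (a classification analogue of the unsupervised result of \cite{DKS17-sq}) with a delicate one-dimensional construction that realizes a degree-$d$ Massart PTF with very small optimal error, and then lift PTFs to halfspaces via the standard tensor embedding $X \mapsto X^{\otimes d}$, turning an SQ lower bound on $\R^m$ for degree-$d$ PTFs into one on $\R^M$ for halfspaces with $M = m^d$.

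I would begin by establishing the classification SQ hardness statement (Proposition~\ref{prop:generic-sq}): let $A, B$ be univariate distributions that approximately match the first $k$ moments of $G = N(0,1)$ with error $\nu$ and whose $\chi^2$ divergences from $G$ are at most $\beta$, and let $p \in (0,1)$ be a label prior. For a unit vector $v \in S^{m-1}$, define $\p^{A,B,p}_v$ on $\R^m \times \{\pm 1\}$ so that $\Pr[Y=1] = p$, $v \cdot X \mid Y=1 \sim A$, $v \cdot X \mid Y=-1 \sim B$, and the component of $X$ orthogonal to $v$ is a standard Gaussian independent of $Y$. With reference distribution $D = N(0, I_m) \otimes \mathrm{Ber}(p)$ (for which $Y$ is independent of $X$), the pairwise correlation $\chi_D(\p^{A,B,p}_v, \p^{A,B,p}_{v'})$ decomposes by label into two copies of the DKS17 correlation, which is bounded by $|\langle v, v'\rangle|^{k+1}$ plus error terms of order $\nu$ and $\beta$. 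A standard near-orthogonal packing of $S^{m-1}$ then yields $2^{m^{\Omega(1)}}$ nearly-uncorrelated directions, and Lemma~\ref{lem:sq-from-pairwise} rules out learning the family $\{\p^{A,B,p}_v\}_v$ to error better than $\min(p, 1-p) - o(1)$ by any SQ algorithm using fewer than $2^{m^{\Omega(1)}}$ queries of tolerance $m^{-\Omega(k)}$.

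Next, and this is the crux of the proof, I would construct $A, B, p$ in one dimension so that the joint law of $(X,Y)$ under $\p^{A,B,p}_{e_1}$ is realized by a degree-$d$ PTF with Massart noise at most $\eta$ and optimal error at most $\opt$. Let $G_{\sigma,\theta}$ denote the discrete Gaussian supported on the progression $\sigma\Z + \theta$. By Lemma~\ref{lem:disc-Gaussian-mm}, $G_{\sigma,\theta}$ matches $\tilde\Omega(1/\sigma)$ Gaussian moments to error $\exp(-\Omega(1/\sigma^2))$; averaging over an interval of offsets $\theta$ smears out the atoms and renders $\chi^2$ against $G$ finite. To obtain distributions whose support intervals widen with $|x|$---the essential new ingredient over \cite{DK20-hardness-arxiv}---I would couple $\sigma = \sigma(\theta)$ to $\theta$, so that the supports are approximately disjoint for $|x| \leq T$ and essentially coincide for $|x| > T$. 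Two such distributions $A$ and $B$ with staggered offsets then have disjoint interleaved supports on the central region (yielding a degree-$d$ PTF of zero error there, with $d$ scaling like $T$ divided by the minimum spacing) and agree outside. On the outer region I tilt the label prior $p$ slightly above $1/2$ so that $\Pr[Y=1 \mid X=x] \geq \frac{1-\eta}{\eta}\Pr[Y=-1 \mid X=x]$; setting the target PTF to output $+1$ on $\{|x| > T\}$ then makes the noise rate there at most $\eta$, and the overall optimal error is bounded by $\eta \cdot \Pr[|X| > T]$, which is at most $\opt$ for $T \approx \sqrt{\log(1/\opt)}$. Finally, the hidden-direction lift to $\R^m$ followed by the tensor trick $X \mapsto X^{\otimes d}$ into $\R^{M}$ with $M = m^d$ completes the reduction; balancing $k \approx 1/\sigma_{\min}$, $d \approx T/\sigma_{\min}$, and $T^2 \approx \log(1/\opt)$ yields the stated tolerance $\tau = M^{-\Omega(\log(M)/(\log\log(M))^3 / \log(1/\opt))}$.

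The main obstacle lies in the one-dimensional construction itself: we must simultaneously control (i) the approximate moment matching to super-constant order, (ii) the $\chi^2$-divergences $\chi^2(A, G)$ and $\chi^2(B, G)$, and (iii) the Massart structure with $\opt$ as small as $\exp(-\log^{c'}(m))$ and noise rate at most $\eta$ uniformly. A single discrete Gaussian trivially achieves (i) but fails (ii), having infinite $\chi^2$; averaging offsets at fixed spacing as in \cite{DK20-hardness-arxiv} achieves (i) and (ii) but forces $\Pr[Y=1\mid X=x]-\Pr[Y=-1\mid X=x]$ to alternate sign infinitely often, violating (iii) and weakening the attainable error gap. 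The new coupling of $\sigma$ with $\theta$ widens the intervals as $|x|$ grows and resolves this tension, but it makes the $\chi^2$ and moment-matching estimates considerably more delicate, since the averaging is now over a family of discrete Gaussians with \emph{different} spacings; one must track cancellations finely enough to obtain $\chi^2$ bounded by $\poly(1/\opt)$ and moment errors at the level of $\exp(-\Omega(1/\sigma_{\min}^2))$. Once these quantitative estimates are in place, the remaining pieces---the pairwise correlation bound via near-orthogonal packing on $S^{m-1}$, the PTF-to-halfspace tensor lift, and the parameter accounting producing the final tolerance---proceed along the familiar lines of \cite{DKS17-sq, Daniely16}.
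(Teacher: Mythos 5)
Your proposal follows essentially the same approach as the paper: it uses the hidden-direction binary classification SQ lower bound of Proposition~\ref{prop:generic-sq}, constructs the one-dimensional measures as averages of discrete Gaussians $G_{\sigma,\theta}$ with $\sigma$ coupled to $\theta$ so that the support intervals widen with $|x|$ (merging for $|x|$ large so the Massart condition can hold pointwise), and lifts degree-$d$ PTFs to halfspaces via a polynomial embedding. Two small quantitative slips are worth flagging: the number of (near-)matched moments afforded by $\exp(-\Omega(1/\sigma^2))$-accurate moment matching is $k\approx 1/\sigma_{\min}^2$ (up to log factors), not $k\approx 1/\sigma_{\min}$; and the $\chi^2$-divergences from $G$ in the actual construction are $O(s/\eps)^2 = \mathrm{polylog}(M)$ rather than merely $\poly(1/\opt)$ — a looser bound that still suffices here only because $k$ comfortably exceeds $\log(1/\opt)$, but the tighter estimate is what the paper proves and uses.
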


\new{As an immediate corollary of Theorem~\ref{thm:main-formal}, 
by taking $\opt = \exp(- \log^{c'}(M))$ for any fixed
constant $c' \in (0, 1)$, we obtain a super-polynomial SQ lower bound
against learning a hypothesis with error better than $\Omega(\eta)$,
even when error $\opt$ is possible. Specifically, this setting of parameters immediately implies 
Theorem~\ref{thm:main-inf}, since 
$\frac{\log(M)}{{\log\log(M)}^3}\new{/}\log(1/\opt) = \frac{\log^{1-c'}(M)}{{\log\log(M)}^3} > \log^{c}(M)$, 
where $0< c < 1-c'$, and therefore $1/\tau  \gg \exp(\log^{1+c}(M))$.

\begin{remark}\label{rem:sharp-error}
{\em In addition to Theorem~\ref{thm:main-formal}, we establish 
an alternative ``inapproximability gap'' of $1/2- O(\sqrt{1/2-\eta})$ versus
$\exp(- \log^{c'}(M))$, which implies a sharper error lower bound when $\eta$ approaches $1/2$.
Specifically, for $\eta$ close to $1/2$, we obtain an error lower bound of $\eta - o_{\eta}(1)$, even
if $\opt = \exp(- \log^{c'}(M))$. See Theorem~\ref{thm:main-large-eta} for the formal statement.}
\end{remark}

}

\paragraph{Structure of This Section}
The structure of this section is as follows:
In Section~\ref{ssec:high-dim}, we review the SQ framework of~\cite{DKS17-sq} 
with the necessary enhancements and modifications required for our supervised setting.
In Section~\ref{ssec:one-dim}, we establish the existence of the one-dimensional distributions
with the desired approximate moment-matching properties.
In Section~\ref{ssec:combine}, we put everything together
to complete the proof of Theorem~\ref{thm:main-formal}.
In Section~\ref{ssec:large-eta}, we establish our shaper lower bounds for $\eta$ close to $1/2$,
proving Theorem~\ref{thm:main-large-eta}.

\subsection{Generic SQ Lower Bound Construction} \label{ssec:high-dim}

We start with the following definition:

\begin{definition} [High-Dimensional Hidden Direction Distribution] \label{def:pv-hidden}
{For a distribution $A$ on the real line with probability density function $A(x)$ and}
 a unit vector $v \in \R^m$, consider the distribution over $\R^m$ with probability density function
$\p^A_v(x) = A(v \cdot x) \exp\left(-\|x - (v \cdot x) v\|_2^2/2\right)/(2\pi)^{(m-1)/2}.$
That is, $\p^A_v$ is the product distribution whose orthogonal projection onto the direction of $v$ is $A$,
and onto the subspace perpendicular to $v$ is the standard $(m-1)$-dimensional normal distribution.
\end{definition}

We consider the following condition:

\begin{cond} \label{cond:moments}
Let $k \in \Z_+$ and $\nu>0$.
The distribution $A$ is such that (i) the first $k$ moments of $A$
agree with the first $k$ moments of $N(0,1)$ up to error at most $\nu$, and (ii) $\chi^2(A,N(0,1))$ is finite.
\end{cond}

Note that Condition~\ref{cond:moments}-(ii) above
implies that the distribution $A$ has a pdf, which we will denote by $A(x)$.
We will henceforth blur the distinction between a distribution and its pdf.

Our main result in this subsection makes essential use of the following key lemma:

\begin{lemma}[Correlation Lemma] \label{lem:cor}
Let $k \in \Z_+$.
If the univariate distribution $A$ satisfies Condition \ref{cond:moments},
then for all $v,v' \in \R^m$, with $|v\cdot v'|$ less than a sufficiently small constant, we have that
\begin{equation} \label{eqn:corr-pv}
|\chi_{N(0,I)}(\p^A_v, \p^A_{v'})| \leq |v \cdot v'|^{k+1} \chi^2(A, N(0,1)) + \nu^2 \;.
\end{equation}
\end{lemma}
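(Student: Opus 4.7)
The plan is to compute the pairwise correlation via Hermite polynomial expansion, mirroring the approach of \cite{DKS17-sq} but tracking the error incurred by only approximate moment matching. First I would observe that, since $v$ is a unit vector, for any $x \in \R^m$ we have $\|x - (v\cdot x)v\|_2^2 = \|x\|_2^2 - (v\cdot x)^2$, so the ratio of densities simplifies to
\[
\frac{\p^A_v(x)}{N(0,I)(x)} \;=\; \frac{A(v \cdot x)}{\phi(v \cdot x)},
\]
where $\phi$ is the standard one-dimensional Gaussian density. Consequently,
\[
\chi_{N(0,I)}(\p^A_v, \p^A_{v'}) + 1 \;=\; \Ex_{x \sim N(0,I)}\!\left[\frac{A(v\cdot x)}{\phi(v\cdot x)} \cdot \frac{A(v'\cdot x)}{\phi(v' \cdot x)}\right].
\]

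Next, I would expand $A/\phi$ in the basis of normalized Hermite polynomials $\{H_i\}_{i \ge 0}$ that are orthonormal with respect to $N(0,1)$, writing $A(t)/\phi(t) = \sum_{i \ge 0} a_i H_i(t)$ with $a_i = \int H_i(t)\, A(t)\, dt$. Since $A$ and $N(0,1)$ are both probability densities, $a_0 = 1$ exactly. The key identity from \cite{DKS17-sq}, which follows from the rotational invariance of $N(0,I)$ together with the reproducing-kernel property of Hermite polynomials, is
\[
\Ex_{x \sim N(0,I)}\!\left[H_i(v\cdot x)\, H_j(v'\cdot x)\right] \;=\; \delta_{ij}\,(v \cdot v')^i.
\]
Plugging this in and subtracting $1 = a_0^2$, I obtain the clean expression
\[
\chi_{N(0,I)}(\p^A_v, \p^A_{v'}) \;=\; \sum_{i \ge 1} a_i^2\,(v \cdot v')^i.
\]

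Now I would split this sum at $k$. For the tail $i \ge k+1$, I use $|v \cdot v'| < 1$ and Parseval to get
\[
\Bigl|\sum_{i \ge k+1} a_i^2 (v\cdot v')^i \Bigr| \;\le\; |v\cdot v'|^{k+1} \sum_{i \ge k+1} a_i^2 \;\le\; |v\cdot v'|^{k+1}\, \chi^2(A, N(0,1)),
\]
using that $\chi^2(A,N(0,1)) = \sum_{i\ge 1} a_i^2$. For the low-order terms $1 \le i \le k$, the approximate moment-matching hypothesis, Condition~\ref{cond:moments}(i), says $|\E_A[t^j] - \E_{N(0,1)}[t^j]| \le \nu$ for $0 \le j \le k$. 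Since each $H_i$ is a polynomial of degree $i \le k$ with bounded coefficients, this implies $|a_i| \le C_k \, \nu$ for $1 \le i \le k$, where $C_k$ depends only on $k$ (through the Hermite coefficients). Therefore
\[
\Bigl|\sum_{i=1}^{k} a_i^2 (v\cdot v')^i \Bigr| \;\le\; k\, C_k^2\, \nu^2\, |v \cdot v'|,
\]
which is bounded by $\nu^2$ once $|v \cdot v'| \le 1/(kC_k^2)$, i.e., less than a sufficiently small constant. Combining the two bounds and applying the triangle inequality yields the claim.

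The only real subtlety is the handling of the low-degree part with approximate, rather than exact, moment matching; the $\nu^2$ slack in the statement is exactly what is needed to absorb the Hermite coefficients $a_1,\ldots,a_k$, each of which is $O(\nu)$. Once this is noted, the rest is a direct adaptation of the Hermite-expansion argument of \cite{DKS17-sq}, and no further technical obstacles arise.
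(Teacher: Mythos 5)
Your proof takes the same core route as the paper's: write the density ratio in the Hermite basis $A/g = \sum_i a_i h_i$, use the identity $\E_{N(0,I)}[h_i(v\cdot x)h_j(v'\cdot x)] = \delta_{ij}(v\cdot v')^i$ (the paper derives this via the Ornstein--Uhlenbeck operator after a two-dimensional reduction, but it is the same fact), split the resulting series $\sum_{i\ge 1} a_i^2 (v\cdot v')^i$ at degree $k$, and bound the tail by Parseval and the head by approximate moment-matching. Up to this point everything is correct.

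There is, however, a genuine gap in the treatment of the head. You bound $|a_i| \leq C_k \nu$ \emph{uniformly} for all $1 \leq i \leq k$, with $C_k$ the worst Hermite-coefficient sum over degrees up to $k$, and then bound $\bigl|\sum_{i=1}^k a_i^2 (v\cdot v')^i\bigr| \leq k\,C_k^2\,\nu^2\,|v\cdot v'|$, concluding this is $\leq \nu^2$ once $|v\cdot v'| \leq 1/(k C_k^2)$. The quantity $1/(k C_k^2)$ depends on $k$ (and $C_k$ grows with $k$), so it is not ``a sufficiently small constant'' in the sense the lemma requires. In the downstream application (Proposition~\ref{prop:generic-sq}), $k$ grows with the dimension, and the set $S$ of Fact~\ref{fact:near-orth-vec} with pairwise inner products below $c$ has size $2^{\Omega_c(m)}$; a $k$-dependent $c$ would shrink this set too much. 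The fix is exactly what the paper does: keep the degree-dependence of the Hermite-coefficient bound, i.e.\ $|a_i| \leq 2^{O(i)}\nu$, so that $\sum_{i=1}^k a_i^2 |v\cdot v'|^i \leq \nu^2 \sum_i 2^{O(i)} |v\cdot v'|^i$, and observe that the geometric decay of $|v\cdot v'|^i$ dominates the exponential growth of the coefficient bound once $|v\cdot v'|$ is below a \emph{universal} constant (determined by the implicit constant in $2^{O(i)}$, not by $k$). Discarding the geometric factor by pulling out a single $|v\cdot v'|$ and multiplying by $k$ is what introduces the unwanted $k$-dependence.
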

This lemma is a technical generalization of Lemma~3.4 from~\cite{DKS17-sq},
which applied under {\em exact} moment matching assumptions. The proof is deferred to Appendix~\ref{app:cor}.

We will also use the following standard fact:

\begin{fact}\label{fact:near-orth-vec}
For any constant $c>0$ there exists a set $S$ of $2^{\Omega_c(m)}$ unit vectors in $\R^m$
such that any pair $u, v \in S$, with $u \neq v$, satisfies $|u\cdot v|<c$.
\end{fact}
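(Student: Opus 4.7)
}
The plan is to establish the fact via the probabilistic method. Fix the constant $c>0$ and sample $N$ unit vectors $v_1,\ldots,v_N$ independently from the uniform distribution on the sphere $S^{m-1}\subset\R^m$, where $N=2^{\delta m}$ and $\delta = \delta(c)>0$ is a small constant to be chosen. I would then show that with positive probability all pairs $v_i,v_j$ with $i\neq j$ satisfy $|v_i\cdot v_j|<c$, from which the existence of the desired set $S$ follows immediately.

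The key ingredient is the standard spherical concentration estimate: for a uniformly random unit vector $u\in S^{m-1}$ and any fixed unit vector $v$, the inner product $u\cdot v$ is sub-Gaussian with variance proxy of order $1/m$, so that $\Pr[\,|u\cdot v|\geq c\,]\leq 2\exp(-c^2 m/2)$ for $m$ at least a sufficiently large constant (depending on $c$). I would derive this either by representing $u=g/\|g\|_2$ for $g\sim N(0,I)$ and combining standard Gaussian tail bounds for $g\cdot v$ with a lower tail bound on $\|g\|_2$, or by directly invoking the formula for the measure of a spherical cap. By a union bound over the $\binom{N}{2}\leq N^2/2$ ordered pairs, the probability that some pair $(v_i,v_j)$ fails the condition is at most
\[
\tfrac{N^2}{2}\cdot 2\exp(-c^2 m/2) = N^2 \exp(-c^2 m/2) = \exp(2\delta m \ln 2 - c^2 m/2).
\]
Choosing $\delta = c^2/(8\ln 2)$, this is at most $\exp(-c^2 m/4)<1$ for all sufficiently large $m$. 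Hence a suitable set $S$ of size $N=2^{\Omega_c(m)}$ exists.

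There is no serious obstacle here: the only thing requiring care is the concentration inequality for $u\cdot v$ on the sphere, since uniform sampling on $S^{m-1}$ is slightly less convenient than Gaussian sampling. The Gaussian-normalization trick handles this cleanly — the denominator $\|g\|_2$ concentrates around $\sqrt{m}$, and standard $\chi^2$ tail bounds show that $\|g\|_2\geq \sqrt{m}/2$ except with probability $\exp(-\Omega(m))$, so a conditional argument recovers the claimed $\exp(-\Omega_c(m))$ tail for $|u\cdot v|\geq c$. After that, the union bound is immediate and the constant $\delta$ depends only on $c$, giving the stated $2^{\Omega_c(m)}$ lower bound on $|S|$.
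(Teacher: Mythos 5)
Your proposal is correct and matches the paper's (unelaborated) approach: the paper states without detailed proof that a set of random unit vectors satisfies the claim with high probability, and your probabilistic-method argument with spherical concentration plus a union bound is precisely the standard way to make that statement rigorous.
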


In fact, an {appropriate size} set of random unit vectors
satisfies the above statement {with high probability}.
We note that \cite{DKS17-sq} made use of a similar statement, albeit with different parameters.

We will establish an SQ lower bound for the following binary classification problem.

\begin{definition}[Hidden Direction Binary Classification Problem] \label{def:class}
Let $A$ and $B$ be distributions on $\R$ satisfying Condition \ref{cond:moments} with
parameters $k \in \Z_+$ and $\nu \in \R_+$, and let $p\in (0,1)$. For $m \in \Z_+$ and a unit vector $v\in \R^m$,
define the distribution $\p^{A,B,p}_v$ on $\R^m\times \{\pm 1\}$ that returns a sample
from $(\p^A_v,1)$ with probability $p$ and a sample from $(\p^B_v,-1)$ with probability $1-p$.
The corresponding binary classification problem is the following: Given access to a distribution on labeled
examples of the form $\p^{A,B,p}_v$, for a fixed but unknown unit vector $v$,
output a hypothesis $h: \R^m \to \{\pm1\}$ such that $\pr_{(X,Y) \sim \p^{A,B,p}_v}[h(X)\neq Y]$
is (approximately) minimized.
\end{definition}

Note that it is straightforward to obtain misclassification error $\min\{p, 1-p \}$
(as one of the identically constant functions achieves this guarantee).
We show that obtaining slightly better error is hard in the SQ model.
The following result is the basis for our SQ lower bounds:

\begin{proposition}[Generic SQ Lower Bound] \label{prop:generic-sq}
{Consider the classification problem of Definition~\ref{def:class}.}
Let $\tau \eqdef \nu^2 + 2^{-k}(\chi^2(A, N(0,1))+\chi^2(B, N(0,1)))$.
Then any SQ algorithm that, {given access to a distribution $\p^{A,B,p}_v$ for an unknown
$v \in \R^m$,} outputs a hypothesis $h: \R^m \to \{\pm1\}$ such that
$\pr_{(X,Y) \sim \p^{A,B,p}_v}[h(X)\neq Y] < \min(p,1-p)- \new{4} \sqrt{\tau}$ must either make queries
of accuracy better than $2\sqrt{\tau}$ or must make at least
$2^{\Omega(m)}\tau/(\chi^2(A, N(0,1))+\chi^2(B, N(0,1)))$ statistical queries.
\end{proposition}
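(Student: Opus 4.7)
}
The plan is to reduce the classification task to a decision problem over a family of hidden-direction distributions and then apply the SQ-dimension machinery of Lemma~\ref{lem:sq-from-pairwise}. Let the reference distribution $D^*$ on $\R^m \times \{\pm 1\}$ be the product in which $X\sim N(0,I)$ and, independently, $Y=1$ with probability $p$ and $Y=-1$ with probability $1-p$. Under $D^*$, the label is independent of $X$, so every hypothesis $h$ has error at least $\min(p,1-p)$. Consider the decision problem $\mathcal{B}(\D,D^*)$ with $\D = \{\p^{A,B,p}_v : v \in S\}$, where $S$ is a set of near-orthogonal unit vectors obtained from Fact~\ref{fact:near-orth-vec} (taking the correlation threshold to be a small absolute constant, e.g.\ $1/2$, so that $|S|=2^{\Omega(m)}$). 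A learner that outputs $h$ with error strictly less than $\min(p,1-p)-4\sqrt{\tau}$ on any $\p^{A,B,p}_v \in \D$ yields a distinguisher: after running the learner, make one additional $\mathrm{STAT}(2\sqrt\tau)$ query to $\E_{(X,Y)\sim D'}[\Ind[h(X)\neq Y]]$. If the response is below $\min(p,1-p)-2\sqrt\tau$, declare ``$D'\in \D$''; otherwise declare ``$D'=D^*$''. The triangle inequality shows this distinguisher is correct.

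Next I would estimate the pairwise correlations of the family $\{\p^{A,B,p}_v\}_{v\in S}$ relative to $D^*$. Writing the densities on $\R^m\times\{\pm1\}$ as $\p^{A,B,p}_v(x,1)=p\,\p^A_v(x)$ and $\p^{A,B,p}_v(x,-1)=(1-p)\,\p^B_v(x)$, and similarly for $D^*$, a direct computation gives the clean decomposition
\begin{equation*}
\chi_{D^*}(\p^{A,B,p}_v,\p^{A,B,p}_{v'})
= p\,\chi_{N(0,I)}(\p^A_v,\p^A_{v'}) + (1-p)\,\chi_{N(0,I)}(\p^B_v,\p^B_{v'}),
\end{equation*}
where the $p$ and $(1-p)$ normalizations cancel out neatly. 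Applying Lemma~\ref{lem:cor} to each of the two terms (valid since $|v\cdot v'|<1/2$ for distinct $v,v'\in S$) yields
\begin{equation*}
|\chi_{D^*}(\p^{A,B,p}_v,\p^{A,B,p}_{v'})|
\le |v\cdot v'|^{k+1}\bigl(\chi^2(A,N(0,1))+\chi^2(B,N(0,1))\bigr) + \nu^2
\le \tau,
\end{equation*}
since $|v\cdot v'|^{k+1}\le 2^{-k}$. For the diagonal case $v=v'$, the orthogonal-direction contributions are exactly $N(0,1)$, so $\chi^2(\p^A_v,N(0,I))=\chi^2(A,N(0,1))$ and likewise for $B$, giving the bound $\beta \le \chi^2(A,N(0,1))+\chi^2(B,N(0,1))$.

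Plugging $\gamma=\tau$, $\beta$ as above, and $s=|S|=2^{\Omega(m)}$ into Lemma~\ref{lem:sq-from-pairwise} with $\gamma'=\tau$ (so that the required tolerance is $\sqrt{\gamma+\gamma'}=\sqrt{2\tau}\le 2\sqrt\tau$) yields a lower bound of $2^{\Omega(m)}\,\tau/(\chi^2(A,N(0,1))+\chi^2(B,N(0,1)))$ queries for the decision problem, from which the claimed bound for learning follows (the extra verification query is absorbed). The main obstacle I expect is bookkeeping in the correlation computation, in particular confirming that the joint labeled-distribution correlation splits as the clean convex combination above and that applying Lemma~\ref{lem:cor} preserves the additive $\nu^2$ error without picking up extra factors of $p$ or $1-p$; everything else is a mechanical application of the near-orthogonal-family / SQ-dimension template.
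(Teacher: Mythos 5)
Your proposal is correct and follows essentially the same route as the paper's own proof: define the same reference distribution (Gaussian marginal with independent $\{p,1-p\}$-biased labels), reduce the learning task to the hidden-direction hypothesis testing problem via one extra $\mathrm{STAT}$ query, decompose $\chi_{D^*}(\p^{A,B,p}_v,\p^{A,B,p}_{v'})$ into the convex combination $p\,\chi_G(\p^A_v,\p^A_{v'})+(1-p)\,\chi_G(\p^B_v,\p^B_{v'})$, and then invoke Lemma~\ref{lem:cor}, Fact~\ref{fact:near-orth-vec}, and Lemma~\ref{lem:sq-from-pairwise} with $\gamma=\gamma'=\tau$. The bookkeeping you flag as a worry (the $p$, $1-p$ factors canceling cleanly in the correlation decomposition) does in fact go through exactly as you anticipated.
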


\begin{proof}[Proof of Proposition~\ref{prop:generic-sq}]
The proof proceeds as follows: We start by defining a related hypothesis testing problem $\mathcal{H}$
and show that $\mathcal{H}$ efficiently reduces to our learning (search) problem. We then leverage the machinery
of this section (specifically, Lemma~\ref{lem:cor} and Fact~\ref{fact:near-orth-vec}) to prove an SQ lower bound
for $\mathcal{H}$, which in turns implies an SQ lower bound for our learning task.

Let $S$ be a set of $2^{\Omega(m)}$ unit vectors in $\R^m$ whose pairwise inner products are
at most a sufficiently small universal constant $c$. {(In fact, any constant $c<1/2$ suffices.)}
By Fact~\ref{fact:near-orth-vec}, such a set is guaranteed to exist.
Given $S$, our hypothesis testing problem is defined as follows.

\begin{definition}[Hidden Direction Hypothesis Testing Problem] \label{def:hyp-test}
In the context of Definition~\ref{def:class}, the testing problem $\mathcal{H}$
is the task of distinguishing between: (i) the distribution $\p^{A,B,p}_v$, for $v$ randomly chosen from $S$,
and (ii) the distribution $G'$ on $\R^m \times \{ \pm 1\}$, where for $(X, Y) \sim G'$ we have that
$X$ is a standard Gaussian {$G \sim N(0, I)$},
and $Y$ is independently $1$ with probability $p$ and $-1$ with probability $1-p$.
\end{definition}

We claim that $\mathcal{H}$ efficiently reduces to our learning task. In more detail,
any SQ algorithm that computes a hypothesis $h$ satisfying
$\pr_{(X,Y) \sim \p^{A,B,p}_v}[h(X)\neq Y] < \min(p,1-p)-\new{4} \sqrt{\tau}$ can be used as a black-box
to distinguish between $\p^{A,B,p}_v$, for $v$ randomly chosen from $S$, and $G'$.
Indeed, suppose we have such a hypothesis $h$.
Then, with one additional {query} to estimate the $\pr[h(X)\neq Y]$,
we can distinguish between $\p^{A,B,p}_v$, for $v$ randomly chosen from $S$,
and $G'$ for the following reason: For any function $h$, we have that
$\pr_{(X,Y)\sim G'}[h(X)\neq Y] \geq \min(p,1-p)$.

It remains to prove that solving the hypothesis testing problem $\mathcal{H}$
is impossible for an SQ algorithm with the desired parameters.
We will show this using {Lemma~\ref{lem:sq-from-pairwise}}.

More specifically, we need to show that for $u , v \in S$ we have
that $|\chi_{G'}(\p^{A,B,p}_v,\p^{A,B,p}_u)|$ is small.
Since $G',\p^{A,B,p}_v,$ and $\p^{A,B,p}_u$ all assign $Y=1$ with probability $p$,
it is not hard to see that
\begin{align*}
\chi_{G'}(\p^{A,B,p}_v,\p^{A,B,p}_u)
= & \; p \; \chi_{(G' \mid Y=1)}\left( (\p^{A,B,p}_v \mid Y=1) , (\p^{A,B,p}_u \mid Y=1) \right) + \\
& (1-p) \; \chi_{(G' \mid Y=-1)} \left( (\p^{A,B,p}_v \mid Y=-1) , (\p^{A,B,p}_u \mid Y=-1) \right)\\
 = & \; p \; \chi_{G}(\p^A_v , \p^A_u) + (1-p) \; \chi_{G}(\p^B_v, \p^B_u) \;.
\end{align*}
By Lemma \ref{lem:cor}, it follows that
$$\chi_{G'}(\p^{A,B,p}_v,\p^{A,B,p}_u) \leq \nu^2 + 2^{-k}(\chi^2(A, N(0,1))+\chi^2(B, N(0,1)))=\tau \;.$$
A similar computation shows that
$$\chi_{G'}(\p^{A,B,p}_v,\p^{A,B,p}_v) = \chi^2(\p^{A,B,p}_v,G')\leq \chi^2(A, N(0,1))+\chi^2(B, N(0,1)) \;.$$
An application of Lemma~\ref{lem:sq-from-pairwise} for $\gamma = \gamma' = \tau$ and
$\beta = \chi^2(A, N(0,1))+\chi^2(B, N(0,1))$ completes the proof.
\end{proof}

\subsection{Construction of Univariate Moment-Matching Distributions} \label{ssec:one-dim}
\new{In this subsection, we give our univariate approximate moment-matching construction (Proposition~\ref{mainProp}),
which is the key new ingredient to establish our desired SQ lower bound. 
The moment-matching construction of this subsection 
(along with its refinement for $\eta$ close to $1/2$ presented
in Section~\ref{ssec:large-eta}) is the main technical contribution of this work.}

\paragraph{Additional Notation}
We will be working with moments of distributions
that are best described as normalizations of (unnormalized positive) measures.
For notational convenience, by slight abuse of notation,
we define $\E[X]$, for any non-negative measure on $\R$, by $\E[X] = \int tdX(t).$
Note that this is equivalent to $\E[X] = \|X\|_1 \E[Y]$, where $Y=X/\|X\|_1$
is the normalized version of $X$.
Furthermore, we denote the $k^{th}$ moment of such an $X$ by $\E[X^k]$.

We write $E \gg F$ for two expressions $E$ and $F$ to denote that $E \geq c \, F$, where $c>0$
is a sufficiently large universal constant (independent of the variables or parameters on which $E$ and $F$ depend).
Similarly, we write $E \ll F$ to denote that $E \leq c \, F$, where $c>0$
is a sufficiently small universal constant.


We will use $G$ for the measure of the univariate standard Gaussian distribution $N(0, 1)$
and $g(x) = \frac{1}{\sqrt{2\pi}} \exp(-x^2/2)$ for its probability density function.

\medskip

We view the real line as a measurable space endowed with the $\sigma$-algebra
of Lebesgue measurable sets. We will construct two (non-negative, finite) measures $\D_{+}$ and $\D_{-}$
on this space with appropriate properties.
The main technical result of this section is captured in the following proposition.

\begin{proposition}\label{mainProp}
Let $0< \eps < s <1$ be real numbers such that $s/\eps$ is at least a sufficiently large universal constant. 
Let \new{$0< \eta < 1/2$}.
There exist measures $\D_{+}$ and $\D_{-}$ over $\R$ and a union $J$ 
of $d = O(s/\eps)$ intervals on $\R$ such that:
\begin{enumerate}
\item \label{prop:1} (a) $\D_{+}=0$ on $J$, and (b) $\D_{+} / \D_{-} > (1-\eta)/\eta$ on $J^c := \R \setminus J$.
\item \label{prop:2} All but $\zeta = O(\eta s/\eps)\exp(-\Omega(s^4/\eps^2))$ of the measure of $\D_{-}$ lies in $J$.
\item \label{prop:3} For any $t \in \N$, the distributions $\D_{+}/\|\D_{+}\|_1$ and $\D_{-}/\|\D_{-}\|_1$ have their
first $t$ moments matching those of $G$ within additive error at most ${(t+1)!} \exp(-\Omega(1/s^2))$.
\item \label{prop:4} (a) $\D_{+}$ is at most $O(s/\eps) \; G$, and
(b) $\D_{-}$ is at most $O(s\eta /\eps) \; G$.
\item \label{prop:5} {(a)} $\|\D_+\|_1 = \Theta(1)$, {and (b) $\|\D_-\|_1 = \Theta(\eta)$}.
\end{enumerate}
\end{proposition}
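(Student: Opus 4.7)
The plan is to construct $\D_{+}$ and $\D_{-}$ as weighted averages of offset discrete Gaussians, realizing the construction sketched in Section~\ref{ssec:techniques}. Let $G_{\sigma, \theta}$ denote the (unnormalized) discrete Gaussian measure supported on the arithmetic progression $\{n\sigma + \theta : n \in \Z\}$ that assigns mass $g(n\sigma + \theta)$ to each point. A Poisson-summation computation shows that the normalized $G_{\sigma,\theta}/\|G_{\sigma,\theta}\|_1$ matches the first $t$ moments of $G$ to additive error $(t+1)!\,\exp(-\Omega(1/\sigma^2))$, uniformly in $\theta$.

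First I would fix a base spacing $\sigma_0 \asymp s$ (so that the per-component moment error is $\exp(-\Omega(1/s^2))$ as required) and a cutoff $T \asymp s^2/\eps$. I would define a position-dependent effective spacing $\sigma(\theta)$ that equals $\sigma_0$ for $|\theta| \lesssim T$ and grows with $|\theta|$ beyond, and then set
\[
\D_{\pm} \; = \; c_{\pm}\int w(\theta)\,G_{\sigma(\theta),\,\theta + \alpha_{\pm}(\theta)}\,d\theta
\]
over $\theta$ in a suitable range, with weight $w$ and with \emph{different} offset functions $\alpha_+ \neq \alpha_-$. The ``vary together'' idea from the overview is encoded in the coupling of $\sigma(\theta)$ and $\theta$: near the origin, successive copies of $G_{\sigma_0, \cdot}$ with slightly different $\theta$ fill out only narrow bands of width $\ll \sigma_0$, producing $d = O(T/\sigma_0) = O(s/\eps)$ small intervals that together form the set $J$; far from the origin, the growing spacing causes these bands to overlap and the support becomes the entire region (modulo Gaussian decay). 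I would choose $\alpha_{\pm}$ so that for $|x|\le T$ the bands for $\D_+$ and $\D_-$ are disjoint, which yields $\D_+\equiv 0$ on $J$ (Property~\ref{prop:1}(a)), and normalize with $c_\pm$ so that $\|\D_+\|_1 = \Theta(1)$ and $\|\D_-\|_1 = \Theta(\eta)$ (Property~\ref{prop:5}).

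Given this construction, Property~\ref{prop:4} follows from pointwise density bounds on each individual $G_{\sigma(\theta),\cdot}$ against $G$ (each atom of a discrete Gaussian contributes at most $O(1/\sigma)\cdot g$), combined with the normalizations $c_\pm$. Property~\ref{prop:2} holds because outside $J$ the measure $\D_-$ lives at $|x|>T$, where the $G$-mass is $\exp(-\Omega(T^2)) = \exp(-\Omega(s^4/\eps^2))$; multiplying by the density bound $O(s\eta/\eps)$ from Property~\ref{prop:4}(b) yields the claimed $\zeta$. Property~\ref{prop:3} follows by linearity: the average of moment-matching distributions still matches the moments of $G$ to the same order, and since $\sigma(\theta) \le O(1)$ throughout the integration range, the per-component error is $(t+1)!\,\exp(-\Omega(1/s^2))$.

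The main technical difficulty will be establishing Property~\ref{prop:1}(b): that $\D_+/\D_- > (1-\eta)/\eta$ holds \emph{pointwise} on $J^c$. On the overlap region $|x|>T$, both measures are approximately $c_\pm\,g(x)$ times some $x$-dependent local density produced by the averaging, and the ratio $\D_+/\D_-$ is essentially $c_+/c_- \asymp 1/\eta$. Ensuring that this ratio strictly exceeds $(1-\eta)/\eta$ requires controlling the lower-order corrections uniformly, and this is precisely why the construction couples $\sigma(\theta)$ and $\theta$ in the averaging --- the coupling smooths the transition so that even near $|x|\approx T$, where the narrow bands are just beginning to merge, the required ratio is not violated. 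Verifying this uniformly, together with handling boundary effects at the cutoff $T$, is expected to be the most delicate step in the argument.
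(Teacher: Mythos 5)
Your plan is in the same spirit as the paper's construction --- both build $\D_{\pm}$ as mixtures of offset discrete Gaussians in which the spacing and offset are coupled, so that the supported intervals widen as $|x|$ grows and eventually merge. But the way you realize the coupling is vaguer than what the paper actually does, and this is not a cosmetic difference: the paper takes a single integration variable $y$ ranging over the tiny interval $[0,\eps]$ and sets the spacing to $s+y$ with offsets $y/2$ (for $\D_+$) and $(y+s)/2$ (for $\D_-$). The widening of intervals is then an automatic consequence of the fact that the $n$-th atom sits at $ns + (n+1/2)y$, which sweeps out an interval of width $(n+1/2)\eps$ as $y$ varies; there is no explicit ``$\sigma(\theta)$ grows beyond a cutoff $T$'' as in your write-up. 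Because the paper's formulation is so explicit, the supports become unions of intervals $I^m_{\pm}$ with clean endpoints, and the local multiplicities $n_{\pm}(x)$ (the number of such intervals covering $x$) admit the sharp estimate $n_{\pm}(x) = |x|(1/s - 1/(s+\eps)) + O(1)$. In particular $|n_+(x)-n_-(x)|\le 1$, and this one-sided, quantitative control is the whole game.

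The genuine gap is precisely the step you flag as ``the most delicate'': Property~\ref{prop:1}(b). You assert that on the overlap region $\D_+/\D_-$ is ``essentially $c_+/c_-$,'' but you do not say how to control the local fluctuations in density pointwise, and your construction (with an explicitly $\theta$-dependent $\sigma$) does not make that control manifest. In the paper's version, the explicit formulas for $\D_{\pm}(x)$ show that $\D_{\pm}(x)$ is proportional to $g(x)\cdot n_{\pm}(x)/(|x|+s)$ times the constant $C$ or $\eta$, so the ratio $\D_+/\D_-$ on the support of $\D_+$ reduces to $\Theta\!\bigl(C/\eta\bigr)\cdot n_+(x)/n_-(x)$, and the bound $n_-(x) = n_+(x) + O(1)$ with $n_+(x) \ge 1$ gives $n_-(x)/n_+(x) = O(1)$, hence $\D_+/\D_- = \Omega(C/\eta)$, which exceeds $(1-\eta)/\eta$ once $C$ is a large enough universal constant. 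Without an analogue of this multiplicity estimate your argument does not close. A secondary issue: you implicitly take $J$ to be the union of the small $\D_-$ bands near the origin, but then there are points in $J^c$ where $\D_+ = \D_- = 0$ and Property~\ref{prop:1}(b) fails pointwise; the paper avoids this by taking $J$ to be the complement of the support of $\D_+$, so that $\D_+>0$ everywhere on $J^c$. To make your proposal rigorous you should adopt the paper's parametrization (or something with equally explicit interval bookkeeping), define $J$ as the zero set of $\D_+$, and establish the $|n_+ - n_-| \le O(1)$ multiplicity bound before asserting Property~\ref{prop:1}(b).
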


\paragraph{Discussion}
Essentially, in our final construction, $\D_{+}$ will be proportional to the distribution 
of $X$ conditioned on $Y=1$ and $\D_{-}$ proportional to the distribution of $X$ conditioned on $Y=-1$.
Furthermore, the ratio of the probability of $Y=1$ to the probability of $Y=-1$ will be equal to $\|\D_{+}\|_1/ \|\D_{-}\|_1$. 
\new{The Massart PTF that $f(X)$ is supposed to simulate will be $-1$ on $X \in J$ and $1$ elsewhere 
(thus making it a degree-$2d$ PTF).}

We now provide an explanation of the properties established in Proposition~\ref{mainProp}.
Property \ref{prop:1}(a) says that $Y$ will deterministically be $-1$ on $J$, while property
\ref{prop:1}(b) says that the ratio between $\D_{+}$ and $\D_{-}$ will be greater than 
$(1-\eta)/\eta$ on the complement of $J$. \new{This implies that $Y$ amounts to $f(X)$ 
with Massart noise at most $\eta$.}

Property \ref{prop:2} implies that $Y$ only disagrees with the target PTF with probability roughly $\zeta$, 
i.e., that the optimal misclassification value $\opt$ will be less than $\zeta$. 

Property \ref{prop:3} says that $\D_{+}$ and $\D_{-}$, after rescaling, approximately match
many moments with the standard Gaussian, which will be necessary in establishing our SQ lower bounds.

\new{
Property \ref{prop:4}  is necessary to show that $\D_{+}$ and $\D_{-}$ 
have relatively small chi-squared norms.
Finally, Property\ref{prop:5} is necessary to  figure out how big the parameter $p$ (i.e., $\Pr[Y=1]$) 
should be (approximately).
}


\medskip

The rest of this subsection is devoted to the proof of Proposition~\ref{mainProp}.

\begin{proof}[Proof of Proposition~\ref{mainProp}]
The proof will make essential use of the following two-parameter family of discrete Gaussians.

\begin{definition}[Discrete Gaussian] \label{def:disc-Gauss}
For $\sigma \in \R_+$ and $\theta \in \R$, let $G_{\sigma,\theta}$
denote the measure of the ``$\sigma$-spaced discrete Gaussian distribution''.
In particular,  for each $n\in \Z$, $G_{\sigma,\theta}$ assigns mass $\sigma g(n\sigma+\theta)$ to the point $n \sigma+\theta$.
\end{definition}

Note that $G_{\sigma,\theta}$ is not a probability measure as its total measure is not equal to one. 
However, it is not difficult to show (see Lemma~\ref{lem:disc-Gaussian-mm} below) 
that the measure of $G_{\sigma,\theta}$ is close to one for small $\sigma>0$, 
hence can be intuitively thought of as a probability distribution.

The following lemma shows that the moments of $G_{\sigma,\theta}$
approximately match the moments of the standard Gaussian measure $G$.

\begin{lemma}\label{lem:disc-Gaussian-mm}
For all $t \in \N, \sigma \geq 0$, and all $\theta\in \R$ we have that
$\left| \E[G_{\sigma,\theta}^t] - \E[G^t] \right| = t! \, O(\sigma)^t \, \exp(-\Omega(1/\sigma^2))$.
\end{lemma}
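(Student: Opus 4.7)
My plan is to apply the Poisson summation formula to the Schwartz function $h(x) := x^t g(x)$, exploiting the analyticity of $g$ to express the difference $\E[G_{\sigma,\theta}^t] - \E[G^t]$ as a sum of Fourier coefficients that decay super-exponentially in $1/\sigma^2$.

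First, I rewrite $\E[G_{\sigma,\theta}^t] = \sigma\sum_{n\in\Z} h(n\sigma+\theta)$, so the quantity is a shifted trapezoidal sum for $\int x^t g(x)\,dx = \E[G^t]$. Poisson summation on the shifted lattice $\sigma\Z+\theta$ gives
\[
\sigma\sum_{n\in\Z}h(n\sigma+\theta) \;=\; \sum_{k\in\Z}\hat h(k/\sigma)\,e^{2\pi i k\theta/\sigma},
\]
where $\hat h(\xi) := \int h(x)\,e^{-2\pi i x\xi}\,dx$. The $k=0$ term equals $\hat h(0) = \E[G^t]$, so
\[
\E[G_{\sigma,\theta}^t] - \E[G^t] \;=\; \sum_{k\neq 0}\hat h(k/\sigma)\,e^{2\pi i k\theta/\sigma},
\]
and it suffices to bound the right-hand side in absolute value.

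Next, I compute $\hat h$ explicitly. Starting from $\hat g(\xi) = e^{-2\pi^2\xi^2}$ and iterating the Fourier identity $\widehat{xf}(\xi) = (i/2\pi)\hat f'(\xi)$, one finds $\hat h(\xi) = (-i)^t H_t(2\pi\xi)\,e^{-2\pi^2\xi^2}$, where $H_t$ denotes the probabilists' Hermite polynomial of degree $t$. Applying the Cauchy integral representation $H_t(u) = \frac{t!}{2\pi i}\oint_{|z|=r} e^{uz - z^2/2}\,z^{-t-1}\,dz$ with radius $r = \sqrt{t}$ yields the classical Cramer-type bound $|H_t(u)| \leq C\sqrt{t!}\,e^{u^2/4}$. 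Substituting $u = 2\pi k/\sigma$ gives
\[
|\hat h(k/\sigma)| \;\leq\; C\sqrt{t!}\,\exp\!\bigl((2\pi k/\sigma)^2/4 - 2\pi^2 k^2/\sigma^2\bigr) \;=\; C\sqrt{t!}\,e^{-\pi^2 k^2/\sigma^2}.
\]

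Summing this essentially geometric series over $k \geq 1$ yields
\[
\bigl|\E[G_{\sigma,\theta}^t] - \E[G^t]\bigr| \;\leq\; 2C\sqrt{t!}\sum_{k\geq 1} e^{-\pi^2 k^2/\sigma^2} \;\leq\; C'\sqrt{t!}\,e^{-\pi^2/\sigma^2},
\]
which I then absorb into the claimed form $t!\,O(\sigma)^t\,\exp(-\Omega(1/\sigma^2))$ by relaxing the constant hidden in $\Omega(\cdot)$: the log of the prefactor ratio $\sqrt{t!}/(t!(C\sigma)^t)$ is at most $t\log(1/\sigma) + \tfrac12\log t!$, which is absorbed by the $1/\sigma^2$ term in the exponent for any relevant choice of $t$ and $\sigma$. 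The only nontrivial ingredient is the Hermite identification of $\hat h$ together with the Cramer bound; once Poisson summation is in place, the remainder is routine calculation and I do not anticipate a serious obstacle.
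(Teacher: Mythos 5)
Your proof is correct and rests on the same Fourier-analytic mechanism as the paper's, organized differently. You apply Poisson summation directly to $h(x)=x^t g(x)$, isolate the moment discrepancy as the tail $\sum_{k\neq 0}\hat h(k/\sigma)e^{2\pi ik\theta/\sigma}$, and bound $\hat h$ by identifying it as a Hermite polynomial times a Gaussian and invoking Cram\'er's inequality. The paper instead writes the Fourier transform of $G_{\sigma,\theta}$ as the Gaussian's transform convolved with a Dirac comb, extracts the $t$-th moment by differentiating $t$ times at the origin, and bounds $|g^{(t)}(n/\sigma)|$ via Cauchy's integral formula on a circle of radius $\Theta(1/\sigma)$. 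The two routes are mathematically equivalent, since $\hat h$ is proportional to the $t$-th derivative of $\hat g$, and Cram\'er's inequality is itself proved by a Cauchy-type contour estimate. Two small points of care: (i) the Cauchy contour with radius $r=\sqrt{t}$, carried out as you sketch it, actually gives $|H_t(u)|=O(t^{1/4})\sqrt{t!}\,e^{u^2/4}$ rather than the sharp $C\sqrt{t!}\,e^{u^2/4}$; the latter is Cram\'er's classical inequality, which you can simply cite, and the extra $t^{1/4}$ would be absorbed in any case. (ii) Your final absorption step is correct but stated loosely: the supremum over $t\geq 1$ of $t\log(1/(c\sigma))-\tfrac12\log t!$ is $\Theta(1/\sigma^2)$, attained near $t$ on the order of $\sigma^{-2}$, so choosing $c\geq 1/\pi$ and halving the constant in the exponent makes $\sqrt{t!}\,e^{-\pi^2/\sigma^2}\leq t!\,(c\sigma)^t e^{-c'/\sigma^2}$ uniformly. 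Finally, like the paper's proof, yours implicitly assumes $\sigma$ bounded above by a constant (so that $\sum_{k\geq 1}e^{-\pi^2 k^2/\sigma^2}=O(e^{-\pi^2/\sigma^2})$), which is the only regime in which the lemma is invoked.
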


The proof of Lemma~\ref{lem:disc-Gaussian-mm} proceeds by analyzing
the Fourier transform of $G_{\sigma,\theta}$ and using the fact that the $t^{th}$ moment
of a measure is proportional to the $t^{th}$ derivative of its Fourier transform at $0$.
The proof is deferred to Appendix~\ref{app:disc-Gaussian}.

Note that Lemma~\ref{lem:disc-Gaussian-mm} for $t=0$ implies
the total measure of $G_{\sigma,\theta}$ is $\exp(-\Omega(1/\sigma^2))$ close to one, 
i.e., for small $\sigma>0$ $G_{\sigma,\theta}$ can be thought of as a probability distribution.

\paragraph{Definition of the Measures $\D_+$ and $\D_-$}

We define our measures as mixtures of discrete Gaussian distributions. 
This will allow us to guarantee that they nearly match moments with the standard Gaussian. 
In particular, for a  sufficiently large constant $C>0$, we define:
\begin{equation} \label{eq:d-plus}
\D_+ := C \, (s/\eps) \, \int_{0}^{\eps} \frac{1}{s+y} \, G_{s+y,y/2} \, dy \;,
\end{equation}
and
\begin{equation} \label{eq:d-minus}
\D_- := \eta \, (s/\eps) \, \int_{0}^{\eps} \frac{1}{s+y} \, G_{s+y,(y+s)/2} dy \;.
\end{equation}

We will require the following explicit formulas for $\D_+$ and $\D_{-}$,
which will be useful both in the formal proof and for the sake of the intuition. 

\begin{lemma} \label{lem:formulas-d}
For all $x \in \R$, we have that:
\begin{equation} \label{eq:d-plus-exp}
\D_{+} (x)  = C \, g(x) \, (s/\eps) \, \sum_{n\in \Z}\frac{\mathbf{1}\{ x \in [ns,ns+(n+1/2)\eps]\}}{|n+1/2|} \;,
\end{equation}
where by $\mathbf{1}\{ x \in [ns,ns+(n+1/2)\eps]\}$ we denote the indicator function of the event that $x$ 
is between $ns$ and $ns+(n+1/2)\eps$, even in the case where $n<0$ and $ns+(n+1/2)\eps < ns$.

Similarly, we have that
\begin{equation} \label{eq:d-minus-exp}
\D_{-} (x)  = \eta \, g(x) \,  (s/\eps) \, \sum_{n\in \Z}\frac{\mathbf{1}\{ x \in [(n+1/2)s,(n+1/2)s+(n+1/2)\eps]\}}{|n+1/2|} \;.
\end{equation}
\end{lemma}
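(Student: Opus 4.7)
The plan is a direct change-of-variables computation that unpacks the mixture definitions into a sum of contributions per atom of the underlying discrete Gaussians. I will write out
\[
\D_+ = C(s/\eps)\int_0^\eps \frac{1}{s+y}\,G_{s+y,y/2}\,dy
\]
by expanding each $G_{s+y,y/2}$ as the sum (over $n\in\Z$) of point masses of weight $(s+y)g(n(s+y)+y/2)$ located at $n(s+y)+y/2$. Testing against an arbitrary bounded measurable $\phi$ and exchanging sum and integral (permitted by absolute convergence thanks to the rapid Gaussian decay), the $(s+y)$ cancels the $1/(s+y)$, leaving
\[
\int \phi\,d\D_+ \;=\; C(s/\eps)\sum_{n\in\Z}\int_0^\eps g\!\bigl(n(s+y)+y/2\bigr)\,\phi\!\bigl(n(s+y)+y/2\bigr)\,dy.
\]

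Next, for each fixed $n$ I will perform the substitution $x = n(s+y)+y/2 = ns+(n+\tfrac12)y$, giving $dx=(n+\tfrac12)\,dy$. When $n\geq 0$ the map $y\mapsto x$ is increasing on $[0,\eps]$ and takes values in $[ns,\,ns+(n+\tfrac12)\eps]$; when $n\leq -1$ it is decreasing and traces the same endpoints in reverse order. Either way, after accounting for the orientation, $dy = dx/|n+\tfrac12|$ and the range of integration becomes the interval with endpoints $ns$ and $ns+(n+\tfrac12)\eps$, i.e.\ the set denoted $\mathbf{1}\{x\in[ns,ns+(n+\tfrac12)\eps]\}$ in the lemma. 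Collecting terms yields
\[
\int \phi\,d\D_+ \;=\; C(s/\eps)\int_{\R} g(x)\phi(x)\sum_{n\in\Z}\frac{\mathbf{1}\{x\in[ns,ns+(n+\tfrac12)\eps]\}}{|n+\tfrac12|}\,dx,
\]
which identifies the density $\D_+(x)$ as the expression in \eqref{eq:d-plus-exp}.

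The computation for $\D_-$ is identical modulo the offset: with $\theta=(y+s)/2$, the $n$-th atom of $G_{s+y,(y+s)/2}$ sits at $n(s+y)+(y+s)/2 = (n+\tfrac12)s+(n+\tfrac12)y$, so the change of variables $x=(n+\tfrac12)(s+y)$ has the same Jacobian $|n+\tfrac12|$ and sweeps out the interval with endpoints $(n+\tfrac12)s$ and $(n+\tfrac12)s+(n+\tfrac12)\eps$, producing \eqref{eq:d-minus-exp}. The only subtlety is bookkeeping on interval orientation for negative $n$, which the paper has already absorbed into its convention about $\mathbf{1}\{x\in[a,b]\}$ when $b<a$; otherwise the argument is a routine Fubini-plus-substitution and presents no real obstacle.
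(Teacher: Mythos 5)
Your proof is correct and follows essentially the same route as the paper's: expand the discrete Gaussian into its point masses (you via testing against a test function $\phi$, the paper via delta functions), swap sum and integral, and perform the change of variables $x = ns + (n+\tfrac12)y$ with Jacobian $|n+\tfrac12|$, tracking the interval orientation for negative $n$. The two write-ups differ only in bookkeeping formalism, not in substance.
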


\begin{proof}
To prove the lemma, we unravel the definition of the discrete Gaussian to find that:
\begin{align*}
\D_+(x) & = Cs/\eps \int_{0}^\eps \frac{\sum_{n\in \Z} (s+y)g(n(s+y)+y/2)\delta(x-(n(s+y)+y/2)) }{s+y} dy\\
& = Cs/\eps \sum_{n\in \Z}\int_{0}^\eps g(n(s+y)+y/2)\delta(x-(n(s+y)+y/2))  dy\\
& = Cs/\eps \sum_{n\in \Z}\int_{0}^\eps g((n+1/2)y+ns)\delta(x-((n+1/2)y+ns))  dy\\
& = Cg(x)s/\eps \sum_{n\in \Z}\frac{\mathbf{1}\{ x \in [ns,ns+(n+1/2)\eps]\}}{|n+1/2|} \;.
\end{align*}
The calculation for $\D_-(x)$ is similar.
\end{proof}

\paragraph{Intuition on Definition of $\D_+$ and $\D_-$}
We now attempt to provide some intuition regarding the definition
of the above measures. We start by noting that each of $\D_{+} (x)$ and $\D_{-} (x)$ 
will have size roughly $g(x)$ on its support. This can be seen to imply 
on the one hand that the chi-squared divergence of (the normalization of) 
$\D_\pm$ from the standard Gaussian $G$ is not too large, 
and on the other hand that $\D_{\pm}$
roughly satisfy Gaussian concentration bounds. 

The critical information to consider is the support of these distributions. 
Each of the two measures is supported on a union of intervals. 
Specifically, $\D_+$ is supported on intervals located at the point $n \, s$ of width $|n+1/2|\eps$; 
and $\D_-$ is supported on intervals located at the point $(n+1/2)s$ of width $|n+1/2|\eps$. 
In the case where $\eps \ll s$, these intervals will be disjoint for small values of $n$ 
(roughly, for $\new{|n|} \ll s/\eps$). 
The factor of $C>0$ difference in the definitions of the two measures 
will ensure that $\D_+ > \D_- (1-\eta)/\eta$ on their joint support; 
and once $|n|$ has exceeded a sufficiently large constant multiple of $s/\eps$, 
the intervals will be wide enough that they overlap causing the support to be everything.

In other words, for $|x|$ less than a sufficiently small constant multiple of $s^2/\eps$, 
$\D_+$ and $\D_-$ will be supported on $O(s/\eps)$ many intervals and will have disjoint supports. 
We define $J$ to be the union of the $O(s/\eps)$ many intervals in the support of $\D_{-}$ 
that are not in the support of $\D_{+}$. With this definition, we will have that 
(1) $\D_+$ is equal to zero in $J$, and (2) $\D_+/\D_-$ is sufficiently large on $J^c$. 
Furthermore, since $\D_{-}$ only assigns mass to $J^c$ for $x$ with $|x| \gg s^2/\eps$, 
we can take $\zeta = \exp(-\Omega(s^4/\eps^2))$.

\medskip

Given the above intuition, we begin the formal proof, 
starting with moment-matching.

\begin{lemma}\label{DMomentsLem}
For $t \in \N$, the distributions $\D_+ / \|\D_+\|_1$ and $\D_-/\|\D_-\|_1$ 
match the first $t$ moments with the standard Gaussian 
$G$ to within additive error $t! \, O(s)^t \, \exp(-1/s^2)$.
\end{lemma}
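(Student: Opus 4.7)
\emph{Proof plan.} The plan is to reduce the claim to a direct application of Lemma~\ref{lem:disc-Gaussian-mm} component-by-component and then average over the mixing variable $y$, finally paying a small price for the normalization. Both $\D_+$ and $\D_-$ are, by construction, continuous mixtures of discrete Gaussians $G_{s+y,\theta(y)}$ with spacing parameter $\sigma = s+y \in [s, s+\eps] \subseteq [s, 2s]$. For each such $\sigma$, Lemma~\ref{lem:disc-Gaussian-mm} gives
\[
\bigl| \E[G_{s+y,\theta(y)}^t] - \E[G^t] \bigr| \;\leq\; t!\,O(s)^t\,\exp(-\Omega(1/s^2)),
\]
uniformly in $y\in[0,\eps]$ and in the choice $\theta(y)\in\{y/2,(y+s)/2\}$.

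First, I would use linearity of integration to pass this bound to the unnormalized measure. Setting $I := C(s/\eps)\int_0^{\eps} \frac{dy}{s+y} = C(s/\eps)\ln(1+\eps/s) = \Theta(C)$ (using $\eps < s$), we get from \eqref{eq:d-plus}
\[
\E[\D_+^t] \;=\; C(s/\eps)\int_0^{\eps}\frac{\E[G_{s+y,y/2}^t]}{s+y}\,dy \;=\; I\cdot \E[G^t] + E_t,
\]
with $|E_t| \leq I\cdot t!\,O(s)^t\exp(-\Omega(1/s^2))$. The same computation with $\theta(y)=(y+s)/2$ and the prefactor $\eta$ handles $\D_-$, giving $\E[\D_-^t]=(\eta/C)\,I\cdot\E[G^t]+\tilde E_t$ with an analogous bound on $\tilde E_t$.

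Next, I would handle normalization by specializing to $t=0$, where $\E[G^0]=1$ and the lemma still applies. This yields $\|\D_+\|_1 = I + E_0$ with $|E_0|\leq I\exp(-\Omega(1/s^2))$, i.e.\ $\|\D_+\|_1 = I\bigl(1 \pm O(\exp(-\Omega(1/s^2)))\bigr)$. Taking the ratio,
\[
\frac{\E[\D_+^t]}{\|\D_+\|_1} - \E[G^t] \;=\; \frac{E_t - E_0\,\E[G^t]}{I+E_0},
\]
which is bounded in absolute value by $O\bigl(t!\,O(s)^t\exp(-\Omega(1/s^2)) + \E[G^t]\exp(-\Omega(1/s^2))\bigr)$. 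Since $\E[G^t]\leq t!$, the whole thing absorbs into $t!\,O(s)^t\exp(-1/s^2)$ after adjusting constants inside the $O(\cdot)$; an identical argument gives the same bound for $\D_-/\|\D_-\|_1$.

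The only real subtlety is the one just addressed: the error contributed by renormalizing by $\|\D_\pm\|_1$ rather than by the ``ideal'' value $I$ (respectively $\eta I/C$) introduces a multiplicative perturbation of all the raw moments, so the normalization error for the $t$-th moment picks up an extra term proportional to $\E[G^t]\exp(-\Omega(1/s^2))$ on top of the pointwise moment-matching error from Lemma~\ref{lem:disc-Gaussian-mm}. This term is harmless because $\E[G^t]\leq t!$, and the factor $\exp(-\Omega(1/s^2))$ dominates any polynomial/factorial growth at the scales of $t$ of interest. Everything else is linearity and the fact that the mixing measure $\frac{1}{s+y}\,dy$ on $[0,\eps]$ has mass comparable to $\eps/s$, which cancels exactly against the normalizing prefactor $s/\eps$ to give $I = \Theta(1)$.
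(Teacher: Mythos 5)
Your proof is correct and follows the same route as the paper: both apply Lemma~\ref{lem:disc-Gaussian-mm} uniformly over $y \in [0,\eps]$ (noting $\sigma = s+y = \Theta(s)$), integrate, and then normalize. The paper's own proof is essentially a one-line remark to this effect; your treatment makes explicit the normalization step (via the $t=0$ case and the ratio $E_t - E_0\,\E[G^t]$), which the paper leaves implicit — a useful elaboration that matches the precision of the stated error bound, where any residual $t!\exp(-\Omega(1/s^2))$ term is absorbed exactly as in Property~\ref{prop:3} of Proposition~\ref{mainProp}.
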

\begin{proof}
This follows from Lemma \ref{lem:disc-Gaussian-mm} by noting that 
both of these distributions are mixtures of discrete Gaussians with $\sigma = \Theta(s)$.
\end{proof}

Our next lemma provides approximations to the corresponding $L_1$ norms.

\begin{lemma}\label{DL1Lem}
We have that 
$\|\D_+\|_1 = \Theta(1)$ and $\|D_-\|_1 = \Theta(\eta)$.
\end{lemma}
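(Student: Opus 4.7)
The plan is to compute $\|\D_+\|_1$ and $\|\D_-\|_1$ directly from their defining integral representations (\ref{eq:d-plus}) and (\ref{eq:d-minus}). Since $\D_{\pm}$ are (nonnegative) measures, $\|\D_\pm\|_1$ equals the total mass, and Fubini lets us interchange the total-mass functional with the outer integral over $y$. This reduces everything to computing $\|G_{s+y,\theta}\|_1$ and then a scalar integral in $y$.

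The key input is Lemma~\ref{lem:disc-Gaussian-mm} applied with $t=0$: it gives $\|G_{\sigma,\theta}\|_1 = 1 + \exp(-\Omega(1/\sigma^2))$ uniformly in $\theta$. Applying this with $\sigma = s+y$, and noting that $s+y \leq 2s$ (so $1/\sigma^2 \geq 1/(4s^2)$ is large because $s<1$), we get $\|G_{s+y,y/2}\|_1 = 1 + o(1)$ and similarly for the offset $(s+y)/2$. Hence
\[
\|\D_+\|_1 \;=\; C\,\frac{s}{\eps}\,\int_0^\eps \frac{\|G_{s+y,y/2}\|_1}{s+y}\,dy \;=\; C\,\frac{s}{\eps}\,(1+o(1))\,\int_0^\eps \frac{dy}{s+y},
\]
and the analogous identity for $\|\D_-\|_1$ with an extra factor of $\eta$ in front.

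Next, compute $\int_0^\eps \frac{dy}{s+y} = \log(1+\eps/s)$. By hypothesis, $\eps/s < 1$ and in fact $\eps/s$ is at most a constant less than $1$ (from $s/\eps$ being at least a large constant), so $\log(1+\eps/s) = \Theta(\eps/s)$. Substituting back yields $\|\D_+\|_1 = C\cdot \frac{s}{\eps}\cdot \Theta(\eps/s) = \Theta(C) = \Theta(1)$, and likewise $\|\D_-\|_1 = \Theta(\eta)$.

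There is no serious obstacle here: the only minor point is keeping track that the $\exp(-\Omega(1/s^2))$ error from the discrete Gaussian total-mass approximation is indeed dominated by the main term $\Theta(\eps/s)\cdot (s/\eps) = \Theta(1)$, which is immediate since $\exp(-\Omega(1/s^2))$ is $o(1)$ under the running assumption that $s$ is small (equivalently, $s/\eps$ large and $\eps<s<1$). Everything else is a one-line change of variables and the elementary estimate $\log(1+x) = \Theta(x)$ for $x$ bounded in $[0,1)$.
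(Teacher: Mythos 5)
Your proof is correct and follows essentially the same approach as the paper: both apply Lemma~\ref{lem:disc-Gaussian-mm} with $t=0$ to conclude $\|G_{s+y,\cdot}\|_1 = \Theta(1)$ and then evaluate the resulting scalar integral in $y$. The only cosmetic difference is that the paper replaces $1/(s+y)$ by $\Theta(1/s)$ on $[0,\eps]$ (using $\eps < s$), whereas you evaluate $\int_0^\eps dy/(s+y) = \log(1+\eps/s) = \Theta(\eps/s)$ exactly; both give the same conclusion.
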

\begin{proof}
Applying Lemma \ref{lem:disc-Gaussian-mm} with $t=0$, we get that $\|G_{s+y,y/2}\|_1 = \Theta(1)$. 
Thus, working from the definition, we find that
\begin{align*}
\|\D_+\|_1 & = C \, (s/\eps) \, \int_0^{\eps} \frac{\Theta(1)}{s+y} dy\\
& = C \, (s/\eps) \, \int_0^{\eps} \Theta(1/s) dy\\
& = \Theta(1) \;.
\end{align*}
The proof for $\D_-$ follows similarly.
\end{proof}

For the rest of the proof, it will be important to analyze 
the intervals on which $\D_{+}$ and $\D_{-}$ are supported. 
To this end, we start by introducing the following notation.

\begin{definition}\label{def:I-n-plus-minus}
For $m\in \Z$, let $I_{+}^m$ be the interval with endpoints $ms$ and $ms+(m+1/2)\eps$, 
and let $I_{-}^m$ be the interval with endpoints $(m+1/2)s$ and $(m+1/2)s+(m+1/2)\eps$. 
Additionally, for $x \in \R$, let $n_{+}(x)$ be the number of integers $m$ such that $x\in I_{+}^m$, 
and $n_{-}(x)$ be the number of integers $m$ such that $x\in I_{-}^m$. 
\end{definition}

The following corollary is an easy consequence of the definition.

\begin{corollary}\label{mSizeCor}
For $x \in \R$, we have that $x\in I_{+}^m$ only if $m=x/s + O((|x|+s)\eps/s)$. 
Similarly, $x\in I_{-}^m$ only if $m=x/s - 1/2 + O((|x|+s)\eps/s)$.
\end{corollary}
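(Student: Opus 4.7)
The plan is a routine unpacking of the definitions of $I_+^m$ and $I_-^m$. Starting with $I_+^m$: its endpoints are $ms$ and $ms + (m+1/2)\eps$, so membership $x \in I_+^m$ is equivalent to $|x - ms| \leq |m+1/2|\eps$ (this handles both signs of $m+1/2$, since the interval is taken to run between the two endpoints in whichever order). Dividing through by $s$ gives $|m - x/s| \leq |m+1/2|\eps/s$, which reduces the claim to producing an a priori bound on $|m+1/2|$ in terms of $|x|$ and $s$.

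First I would obtain such a bound via the reverse triangle inequality: from $|x - ms| \leq |m+1/2|\eps \leq (|m|+1/2)\eps$, we get $|m|s \leq |x| + (|m|+1/2)\eps$, hence $|m|(s-\eps) \leq |x| + \eps/2$. Since the hypothesis of Proposition~\ref{mainProp} (namely $s/\eps$ being at least a sufficiently large universal constant) ensures $\eps \ll s$, this yields $|m| = O((|x|+s)/s)$, and in particular $|m+1/2| = O((|x|+s)/s)$. Plugging back into $|m - x/s| \leq |m+1/2|\eps/s$ gives $m = x/s + O((|x|+s)\eps/s)$, as claimed.

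The argument for $I_-^m$ is entirely parallel. Its endpoints are $(m+1/2)s$ and $(m+1/2)s + (m+1/2)\eps$, so $x \in I_-^m$ is equivalent to $|x - (m+1/2)s| \leq |m+1/2|\eps$. The same a priori estimate $|m+1/2| = O((|x|+s)/s)$ (derived exactly as above from $|m+1/2|s \leq |x| + |m+1/2|\eps$) combined with division by $s$ yields $|(m+1/2) - x/s| = O((|x|+s)\eps/s)$, which rearranges to the desired $m = x/s - 1/2 + O((|x|+s)\eps/s)$.

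There is essentially no obstacle here; the only even mildly subtle point is the circular appearance of $m$ on both sides of the initial containment inequality, which is handled cleanly by the a priori bound $|m| = O((|x|+s)/s)$ coming from the triangle inequality together with the smallness of $\eps/s$. Everything else is bookkeeping. The role of this corollary in the larger proof is simply to provide, for each fixed $x$, a convenient parametrization of which integers $m$ can have $x \in I_\pm^m$, which will be used downstream to count the intervals in the supports of $\D_+$ and $\D_-$ and to verify the disjointness properties claimed in the intuition following Lemma~\ref{lem:formulas-d}.
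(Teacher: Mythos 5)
Your approach is the natural one and is surely what the authors had in mind (the paper gives no explicit proof, treating this as an immediate consequence of Definition~\ref{def:I-n-plus-minus}); the two-step scheme -- first an a priori bound on $|m|$ via the reverse triangle inequality, then substitution back into the defining containment -- is exactly right, and the $I_-^m$ case does follow by the same pattern.

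There is, however, an arithmetic inconsistency in your final step. You correctly establish $|m - x/s| \leq |m+1/2|\,\eps/s$ and $|m+1/2| = O((|x|+s)/s)$; combining these yields
\[
|m - x/s| \;=\; O\!\left(\frac{(|x|+s)\eps}{s^2}\right),
\]
not $O((|x|+s)\eps/s)$ as you write. Since $s<1$ in Proposition~\ref{mainProp}, the extra factor of $1/s$ makes your actual conclusion strictly weaker than what you claim to have shown, and a concrete choice such as $x = ms+(m+\tfrac12)\eps$ with $m$ large shows the error really is $\Theta(|x|\eps/s^2)$, so the stronger bound $O((|x|+s)\eps/s)$ is false. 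The corollary statement in the paper appears to suffer from the same typo (the natural $O((|x|+s)\eps/s)$ bound is for $|x - ms|$, and dividing by $s$ produces the extra factor), so you likely wrote the final line to match the text rather than to match your own chain of inequalities. This is harmless for the rest of the argument -- the downstream uses in Corollary~\ref{DSizeCor} and Lemma~\ref{supportIntersectLem} only exploit the consequence $|m+1/2| = \Theta((|x|+s)/s)$, which holds as long as the error term is $o((|x|+s)/s)$, and both $(|x|+s)\eps/s$ and $(|x|+s)\eps/s^2$ qualify when $\eps/s$ is small -- but you should state the bound your argument actually delivers rather than asserting the statement's version without reconciling the discrepancy.
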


Combining Corollary~\ref{mSizeCor} with the explicit formulas for $\D_+$ and $\D_-$ 
given in Lemma~\ref{lem:formulas-d},
we have that:
\begin{corollary}\label{DSizeCor}
For all $x \in \R$ we have that
$$
\D_+(x) = \Theta(C g(x) (s^2/\eps) n_+(x)/(|x|+s) ) \;,
$$
and
$$
\D_-(x) = \Theta(\eta g(x) (s^2/\eps) n_-(x)/(|x|+s) ) \;.
$$
\end{corollary}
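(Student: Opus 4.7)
The plan is to plug the bound from Corollary~\ref{mSizeCor} directly into the explicit formulas given by Lemma~\ref{lem:formulas-d}. The only real content is to verify that, uniformly over those $m$ that actually contribute to the sum, the factor $1/|m+1/2|$ behaves like $\Theta(s/(|x|+s))$; everything else is bookkeeping.

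First, I would fix $x \in \R$ and consider the sum
\[
\D_+(x) = C\, g(x)\, (s/\eps)\, \sum_{m \in \Z} \frac{\mathbf{1}\{x \in I_+^m\}}{|m+1/2|}
\]
from Lemma~\ref{lem:formulas-d}. The indicator selects exactly $n_+(x)$ values of $m$. For each such $m$, Corollary~\ref{mSizeCor} yields $m = x/s + O((|x|+s)\eps/s)$, hence
\[
m + 1/2 \;=\; x/s + 1/2 + O((|x|+s)\eps/s).
\]
Since we may assume $\eps/s$ is at most a sufficiently small universal constant (by the hypothesis of Proposition~\ref{mainProp} that $s/\eps$ is large), the error term is dominated by $(|x|+s)/s$. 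I would then split into two regimes: (i) when $|x| \leq s$, the term $x/s + 1/2$ has absolute value $\Theta(1) = \Theta((|x|+s)/s)$; (ii) when $|x| > s$, the leading term $x/s$ has absolute value $\Theta(|x|/s) = \Theta((|x|+s)/s)$, and the perturbation is of the same order but with a small constant factor, so it cannot cancel the leading term. In both cases I conclude $|m+1/2| = \Theta((|x|+s)/s)$, uniformly over the $m$'s contributing.

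Plugging this back in, each of the $n_+(x)$ nonzero summands contributes $\Theta(s/(|x|+s))$, giving
\[
\D_+(x) \;=\; C\, g(x)\, (s/\eps)\cdot n_+(x)\cdot \Theta(s/(|x|+s)) \;=\; \Theta\!\left(C\, g(x)\, (s^2/\eps)\, n_+(x)/(|x|+s)\right),
\]
as desired. The argument for $\D_-(x)$ is identical: the intervals $I_-^m$ are centered at $(m+1/2)s$, so Corollary~\ref{mSizeCor} gives $m = x/s - 1/2 + O((|x|+s)\eps/s)$, whence $m+1/2 = x/s + O((|x|+s)\eps/s)$ and the same regime analysis yields $|m+1/2| = \Theta((|x|+s)/s)$. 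Replacing $C$ by $\eta$ according to the definition of $\D_-$ gives the second formula.

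The only place I expect to need care is the small-$|x|$ regime of the estimate on $|m+1/2|$: one has to rule out the possibility that the perturbation term in Corollary~\ref{mSizeCor} drives $m+1/2$ close to zero for the contributing indices. This is handled by the assumption that $\eps/s$ is small, which ensures the $O((|x|+s)\eps/s)$ error is a small constant times the $\Theta(1)$ lower bound coming from $x/s + 1/2$ in that regime. No other step is delicate.
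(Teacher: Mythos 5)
Your approach matches the paper's: unpack Lemma~\ref{lem:formulas-d}, count the $n_\pm(x)$ contributing indices, and show each denominator $|m+1/2|$ is $\Theta((|x|+s)/s)$ via Corollary~\ref{mSizeCor}. The large-$|x|$ regime and the upper bound are handled correctly.

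However, the justification you give for the lower bound in regime (i), which you yourself flag as the delicate step, is wrong. You claim that when $|x| \le s$ the quantity $x/s + 1/2$ has absolute value $\Theta(1)$ and that the perturbation cannot cancel it; but $x/s + 1/2$ vanishes at $x = -s/2$, and for $x$ near $-s/2$ it is as small as you like, so the "small perturbation cannot cancel a $\Theta(1)$ leading term" reasoning breaks down, and no assumption on $\eps/s$ rescues it. The correct observation, which your regime analysis never uses, is simply that $m$ is an \emph{integer}, so $m+1/2$ is a half-odd-integer and $|m+1/2|\ge 1/2$ unconditionally. Since $(|x|+s)/s \le 2$ when $|x|\le s$, this immediately gives $|m+1/2| = \Omega((|x|+s)/s)$ in that regime, with no appeal to the size of $\eps/s$. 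With that substitution your argument is complete and coincides with the one-line proof given in the paper.
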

\begin{proof}
This follows from the explicit formulas for $\D_+$ and $\D_-$ given in Lemma~\ref{lem:formulas-d}
along with Corollary \ref{mSizeCor}, which implies that the denominators $|n+1/2|$ are $\Theta((|x|+s)/s)$.
\end{proof}

We next need to approximate the size of $n_+(x)$ and $n_-(x)$.
We have the following lemma.

\begin{lemma}\label{nSizeLem}
For all $x \in \R$, we have that
$$n_{+}(x), n_{-}(x) = |x| \left( 1/s-1/(s+\eps) \right)+O(1) = |x| \, \Theta(\eps/s^2)+O(1) \;.$$
\end{lemma}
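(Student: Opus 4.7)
The plan is to count the number of integers $m$ satisfying the defining inequalities for membership in $I_+^m$ (respectively $I_-^m$) by a direct case analysis, and then read off the length of the resulting integer interval.

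First, I would rewrite the endpoints of $I_+^m$ as $ms$ and $ms + (m+1/2)\eps = m(s+\eps) + \eps/2$. Since the relative order of these two endpoints depends on the sign of $m+1/2$, a brief case analysis is required: for $m \geq 0$ the interval is $[ms,\, m(s+\eps)+\eps/2]$, while for $m \leq -1$ the interval is $[m(s+\eps)+\eps/2,\, ms]$. In both cases, the condition $x \in I_+^m$ translates into two linear inequalities for $m$. Solving these, for $x \geq 0$ one obtains the range
\[
\frac{x}{s+\eps} - \frac{\eps}{2(s+\eps)} \;\leq\; m \;\leq\; \frac{x}{s},
\]
and for $x < 0$ one obtains the symmetric range $x/s \leq m \leq (x-\eps/2)/(s+\eps)$. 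In either case, the length of the range of admissible real $m$ is
\[
\frac{|x|}{s} - \frac{|x|}{s+\eps} + O(1) \;=\; |x|\left(\frac{1}{s}-\frac{1}{s+\eps}\right) + O(1),
\]
so the number of integers in the range (which is $n_+(x)$) differs from this quantity by at most $1$.

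The analogous analysis for $I_-^m$, whose endpoints are $(m+1/2)s$ and $(m+1/2)(s+\eps)$, is identical after the shift $m \mapsto m+1/2$, and yields the same bound for $n_-(x)$. Finally, the second equality in the lemma is the elementary computation
\[
\frac{1}{s} - \frac{1}{s+\eps} \;=\; \frac{\eps}{s(s+\eps)} \;=\; \Theta\!\left(\frac{\eps}{s^2}\right),
\]
where the last estimate uses the standing assumption $\eps < s$.

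There is no deep obstacle here; the only subtlety is keeping the orientations of the intervals straight across the cases $m \geq 0$ versus $m < 0$ (and $x \geq 0$ versus $x < 0$), so that the two linear inequalities in $m$ are solved with the correct direction of inequality. Once this bookkeeping is handled, the lemma reduces to counting integers in an interval whose length is $|x|\eps/(s(s+\eps))$.
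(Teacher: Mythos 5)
Your proof is correct and takes essentially the same route as the paper: both reduce membership in $I_+^m$ to a pair of linear inequalities in $m$, observe that the set of admissible real $m$ is an interval of length $|x|\left(1/s - 1/(s+\eps)\right) + O(1)$, and count the integers inside. The paper phrases this as a difference of two counts (those $m$ with $ms \leq x$ minus those with $m(s+\eps)+\eps/2 < x$) and defers the $x<0$ and $n_-$ cases to symmetry, whereas you solve the two inequalities directly and spell out the orientation bookkeeping for $m<0$, but the underlying argument is the same.
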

\begin{proof}
We will prove the desired statement for $x\geq 0$ and $n_+(x)$. 
The other cases follow symmetrically. 
Note that $x\in I_{+}^m$ only for non-negative $m$. 
For such $m$, $x\in I_{+}^m$ if and only if $ms \leq x \leq m(s+\eps)+\eps/2$. 
This is the difference in the number of $m$'s for which $ms \leq x$ and 
the number of $m$'s for which $m(s+\eps)+\eps/2 < x$. 
The former is $|x|/s+O(1)$ and the latter is $|x|/(s+\eps)+O(1)$. 
The lemma follows.
\end{proof}

Lemma~\ref{nSizeLem} implies the following.

\begin{corollary}\label{JSupportCor}
We have that $n_+(x) \geq 1$ for all $x$ larger than a sufficiently large constant multiple of $s^2/\eps$.
\end{corollary}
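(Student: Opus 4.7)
The plan is to deduce the corollary directly from Lemma~\ref{nSizeLem}, with essentially no additional work beyond choosing the universal constant appropriately. Recall that Lemma~\ref{nSizeLem} produces the bound
\[
n_+(x) = |x|\,\Theta(\eps/s^2) + O(1),
\]
so there exist universal constants $c_1, c_2 > 0$ (independent of $x$, $s$, and $\eps$) such that $n_+(x) \geq c_1\,|x|\,\eps/s^2 - c_2$ for every $x \in \R$.

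Now I would pick the constant $C$ hidden in the statement of the corollary to be any value larger than $(c_2+1)/c_1$. Then, for $|x| \geq C\,s^2/\eps$, we have
\[
n_+(x) \;\geq\; c_1 \cdot C - c_2 \;\geq\; 1.
\]
Since $n_+(x)$ is by definition a non-negative integer, $n_+(x) \geq 1$ means that $x$ lies in at least one interval $I_+^m$, which is exactly the claim.

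As a sanity check on the threshold, it is worth recalling the geometric picture: the interval $I_+^m$ has width $(m+1/2)\eps$ while consecutive intervals $I_+^m$ and $I_+^{m+1}$ are separated by a gap of $s - (m+1/2)\eps$ at left endpoints, so the intervals first begin to overlap when $|m| \gtrsim s/\eps$, corresponding exactly to $|x| \gtrsim s^2/\eps$. This matches the threshold produced by the calculation above, and from that point onwards every real number $x$ is covered by at least one (indeed, by many) of the $I_+^m$.

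There is no real obstacle here; the content is entirely in Lemma~\ref{nSizeLem}, and the only thing to be careful about is that the additive $O(1)$ error in that lemma does not overwhelm the linear-in-$|x|$ main term, which is handled by choosing the constant $C$ large enough. I would keep the write-up to one or two sentences, since the argument is essentially ``apply Lemma~\ref{nSizeLem} and take $|x|$ large enough that the linear term exceeds $1$ plus the implicit $O(1)$ constant.''
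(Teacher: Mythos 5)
Your proposal is correct and matches the paper's intent exactly: the paper gives no separate proof for this corollary, stating only that it is implied by Lemma~\ref{nSizeLem}, and your deduction (lower-bound the $\Theta(\eps/s^2)$ factor and absorb the $O(1)$ error by choosing the threshold constant large enough) is precisely the omitted argument. The geometric sanity check about when the intervals $I_+^m$ begin to overlap is a faithful restatement of the intuition the authors give in the "Intuition on Definition of $\D_+$ and $\D_-$" paragraph.
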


Combining Lemma \ref{nSizeLem} with Corollary \ref{DSizeCor}, we get the following.

\begin{corollary}\label{DBoundCor}
For all $x \in \R$, we have that
$$
\D_+(x) = O \left( g(x) (s/\eps) \right), \ \ \ \D_{-}(x) = O \left(g(x) \, \eta \, (s/\eps) \right) \;.
$$
\end{corollary}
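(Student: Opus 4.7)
The plan is to substitute directly: Corollary~\ref{DSizeCor} provides closed-form estimates for $\D_+(x)$ and $\D_-(x)$ in terms of $n_\pm(x)$, and Lemma~\ref{nSizeLem} controls $n_\pm(x)$ as $|x|\,\Theta(\eps/s^2)+O(1)$. The crossover scale is $|x|\sim s^2/\eps$: below this threshold, the constant term in the count dominates and $n_\pm(x)=O(1)$; above it, the linear-in-$|x|$ term dominates and $n_\pm(x)=O(|x|\eps/s^2)$. Plugging either expression into Corollary~\ref{DSizeCor} and collecting terms yields
\[
\D_+(x) \;=\; O\!\left( C\, g(x)\, \frac{|x|+s^2/\eps}{|x|+s} \right),\qquad
\D_-(x) \;=\; O\!\left( \eta\, g(x)\, \frac{|x|+s^2/\eps}{|x|+s} \right).
\]

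It then suffices to bound the ratio $(|x|+s^2/\eps)/(|x|+s)$ by $O(s/\eps)$ uniformly in $x$, which I would do by a brief case split. If $|x|\leq s^2/\eps$, the numerator is $O(s^2/\eps)$ and the denominator is at least $s$, producing $O(s/\eps)$. If $|x|> s^2/\eps$, then in particular $|x|\geq s$ (using that $s/\eps$ is at least a sufficiently large universal constant, so $s^2/\eps \geq s$), hence the denominator is $\Theta(|x|)$ while the numerator is $O(|x|)$; the ratio is $O(1)$, which is even stronger than the target $O(s/\eps)$. Absorbing the universal constant $C$ into the asymptotic notation gives the claimed bounds on $\D_+$ and $\D_-$.

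There is no substantive obstacle here; both estimates follow by a short algebraic manipulation of the two preceding lemmas. The only subtlety worth highlighting is that the two regimes of $n_\pm(x)$ pair up with the two regimes of $|x|+s$ in the denominator so that the upper bound is uniform across all $x\in\R$, and that the hypothesis $s/\eps \gg 1$ built into Proposition~\ref{mainProp} is exactly what ensures the large-$|x|$ bound $O(g(x))$ is dominated by the target $O(g(x)\,s/\eps)$.
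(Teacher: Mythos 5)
Your proof is correct and follows the same route the paper takes: the paper's proof of this corollary is a one-line ``Combining Lemma~\ref{nSizeLem} with Corollary~\ref{DSizeCor}'', and you are simply spelling out the substitution and the elementary case split on $|x|$ versus $s^2/\eps$ that the paper leaves implicit.
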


A combination of Lemma \ref{nSizeLem} with Corollary \ref{DSizeCor} 
also implies that on the support of $\D_+$ the ratio $\D_+/\D_-$ is sufficiently large.

\begin{corollary}\label{DRatioCor}
If $x \in \R$ is such that $\D_+(x) > 0$, then $\D_+(x)/\D_-(x) > (1-\eta)/\eta.$
\end{corollary}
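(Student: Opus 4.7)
The plan is to combine the explicit size estimates in Corollary~\ref{DSizeCor} with the count bounds in Lemma~\ref{nSizeLem} to reduce the inequality to a statement about the universal constant $C$ appearing in the definition~\eqref{eq:d-plus} of $\D_+$. By Corollary~\ref{DSizeCor},
$$
\frac{\D_+(x)}{\D_-(x)} \;=\; \Theta\!\left(\frac{C\,n_+(x)}{\eta\,n_-(x)}\right),
$$
with absolute hidden constants, so it suffices to show that whenever $\D_+(x)>0$, the ratio $n_+(x)/n_-(x)$ is bounded below by an absolute positive constant, independent of $x$, $\eps$, $s$, $\eta$.

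First I would use the hypothesis $\D_+(x)>0$ together with the formula in Corollary~\ref{DSizeCor} to conclude that $n_+(x)\geq 1$. Then I would split into two regimes based on $|x|$. In the regime where $|x|$ is a large multiple of $s^2/\eps$, Lemma~\ref{nSizeLem} gives $n_+(x) = |x|\,\Theta(\eps/s^2)$ and $n_-(x) = |x|\,\Theta(\eps/s^2)$, with the same asymptotic leading term and bounded additive error; hence $n_+(x)/n_-(x) = \Theta(1)$. In the complementary regime $|x| = O(s^2/\eps)$, Lemma~\ref{nSizeLem} gives $n_-(x) = O(1)$, while $n_+(x)\geq 1$ by assumption, so $n_+(x)/n_-(x) \geq \Omega(1)$ as well. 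Combining the two cases yields a universal constant $c_0 > 0$ such that $\D_+(x)/\D_-(x) \geq c_0\,C/\eta$ on the support of $\D_+$.

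To finish, I would recall that $C$ was specified to be a ``sufficiently large constant'' in~\eqref{eq:d-plus}. Since $(1-\eta)/\eta < 1/\eta$, choosing $C$ strictly larger than $1/c_0$ immediately gives $\D_+(x)/\D_-(x) > (1-\eta)/\eta$, as required. The main (minor) obstacle is the second regime above: the $+O(1)$ term in Lemma~\ref{nSizeLem} could in principle let $n_-(x)$ slightly exceed $n_+(x)$ when $|x|$ is small, so one needs to track the absolute constants carefully to verify that $n_-(x)/n_+(x)$ remains bounded uniformly, but this is absorbed into $c_0$ and therefore into the calibration of $C$.
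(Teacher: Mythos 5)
Your proof is correct and uses the same ingredients as the paper (Corollary~\ref{DSizeCor} and Lemma~\ref{nSizeLem}) to reduce the claim to bounding $n_-(x)/n_+(x)$ and then pushing $C$ large. The paper's version is slightly more direct: it observes that Lemma~\ref{nSizeLem} gives $n_-(x) = n_+(x) + O(1)$ outright (both have the identical leading term $|x|(1/s - 1/(s+\eps))$), so together with $n_+(x)\geq 1$ one gets $n_-(x)/n_+(x) = O(1)$ uniformly, with no case split on $|x|$; your two-regime argument reaches the same conclusion with a bit more bookkeeping.
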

\begin{proof}
If $\D_+(x)>0$, then $n_+(x) > 0$. Lemma \ref{nSizeLem} implies that $n_-(x) = n_+(x)+O(1)$, 
and therefore $n_-(x)/n_+(x) = O(1)$. Combining this with Corollary \ref{DSizeCor}, we have that
$$
\D_+(x) / \D_-(x)  = \new{\Omega} (C/\eta) \;.
$$
For $C$ a sufficiently large universal constant, this implies our result.
\end{proof}

We also need to show that the intersection of the supports of $\D_{+}$ and $\D_{-}$
occurs only for $|x|$ sufficiently large. Specifically, we have the following lemma.

\begin{lemma}\label{supportIntersectLem}
For $x \in \R$, it holds that
$\min(n_+(x),n_-(x))>0$ only if $|x| = \Omega(s^2/\eps)$.
\end{lemma}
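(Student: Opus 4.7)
The plan is to argue that if $x$ lies in both some $I_+^m$ and some $I_-^{m'}$, then the two distinct ``centers'' $ms$ and $(m'+1/2)s$ must lie within the combined widths of the two intervals. Since the centers differ by a half-integer multiple of $s$, their separation is at least $s/2$, and this forces $|x|$ to be at least $\Omega(s^2/\eps)$.

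More precisely, suppose $n_+(x) > 0$ and $n_-(x) > 0$. Then there exist integers $m,m'$ with $x \in I_+^m$ and $x \in I_-^{m'}$. Unravelling the definition, this means that
\[
|x - ms| \leq |m+1/2|\,\eps \quad\text{and}\quad |x - (m'+1/2)s| \leq |m'+1/2|\,\eps,
\]
regardless of the sign of $m+1/2$ or $m'+1/2$ (the intervals always have the stated width about the indicated endpoint). By the triangle inequality,
\[
|ms - (m'+1/2)s| \leq (|m+1/2| + |m'+1/2|)\,\eps.
\]
Since $m$ and $m'$ are integers, $m-m'-1/2$ is a nonzero half-integer, so $|ms-(m'+1/2)s| \geq s/2$.

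Next I would apply Corollary~\ref{mSizeCor}, which tells us that $m = x/s + O((|x|+s)\eps/s)$ and $m' = x/s - 1/2 + O((|x|+s)\eps/s)$. Since $\eps < s$, both of $|m+1/2|$ and $|m'+1/2|$ are $O(|x|/s + 1)$. Substituting into the inequality above yields
\[
s/2 \leq O\!\left((|x|/s + 1)\,\eps\right),
\]
i.e., $|x|\,\eps/s + \eps \gg s$. Using the hypothesis that $s/\eps$ is at least a sufficiently large constant (so that $\eps$ is much smaller than $s$), the $\eps$ term on the left can be absorbed, and we conclude $|x| = \Omega(s^2/\eps)$, as desired.

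I do not expect any serious obstacle here: the argument is a direct geometric calculation once Corollary~\ref{mSizeCor} pins down $m$ and $m'$ in terms of $x$. The only subtlety is handling the sign/orientation of the intervals uniformly (so that the bound $|x-ms|\leq|m+1/2|\eps$ holds for both positive and negative $m$), but this follows immediately from the convention introduced after equation~\eqref{eq:d-plus-exp}.
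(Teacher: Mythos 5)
Your proposal is correct and matches the paper's own argument: both bound the separation $|ms-(m'+1/2)s|\geq s/2$ from below, bound the combined interval widths $O((|x|/s+1)\eps)$ from above via Corollary~\ref{mSizeCor}, and compare to deduce $|x|=\Omega(s^2/\eps)$. The only cosmetic difference is that you phrase the comparison as an explicit triangle inequality from the endpoints $ms$ and $(m'+1/2)s$, while the paper speaks of the sum of the interval lengths; these are the same estimate.
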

\begin{proof}
We have that $\min(n_+(x),n_-(x))>0$ only if there exist integers $m$ and $m'$ with 
$x\in I_{+}^m \cap I_{-}^{m'}$. By Corollary \ref{mSizeCor}, it must be the case that 
$|m|,|m'| = O(|x|/s + 1).$ On the other hand, we have that $I_{+}^m$ is an interval 
containing the point $ms$, and $I_{-}^{m'}$ is an interval containing the point $(m'+1/2)s$. 
These points must differ by at least $s/2$, and therefore the sum of the lengths 
of these intervals must be at least $s/2$. On the other hand, these intervals have length 
$|m+1/2|\eps$ and $|m'+1/2|\eps$ respectively. Thus, $\min(n_+(x),n_-(x))>0$ 
can only occur if $s/2 = O(|x|\eps/s+\eps)$, which implies that 
$x=\Omega(s^2/\eps)$, as desired.
\end{proof}

\begin{definition} \label{def:J}
We define $J$ to be $\R \backslash \bigcup_{m\in \Z} I^m_+$. 
\end{definition}

We note that $J$ is a union of intervals. 

\begin{lemma}
We have that $J$ is a union of $O(s/\eps)$ many intervals.
\end{lemma}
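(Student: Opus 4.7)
The plan is to combine Corollary~\ref{JSupportCor} (which pins $J$ inside a bounded region), Corollary~\ref{mSizeCor} (which controls which indices $m$ are relevant in that region), and the elementary fact that the complement of a finite union of intervals inside a bounded interval is again a finite union of intervals.

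First I would use Corollary~\ref{JSupportCor}: there is an absolute constant $C_0>0$ such that $n_+(x) \geq 1$ whenever $|x| \geq C_0 s^2/\eps$. Since $n_+(x) \geq 1$ means precisely that $x \in I^m_+$ for some $m \in \Z$, such points are excluded from $J$ by Definition~\ref{def:J}. Consequently $J \subseteq [-C_0 s^2/\eps,\, C_0 s^2/\eps]$, i.e., $J$ is bounded.

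Next I would bound the number of intervals $I^m_+$ that meet this bounded region. By Corollary~\ref{mSizeCor}, any $x \in I^m_+$ satisfies $m = x/s + O((|x|+s)\eps/s)$; substituting the bound $|x| \leq C_0 s^2/\eps$ gives $|m| = O(s/\eps)$. Hence at most $O(s/\eps)$ of the intervals $I^m_+$ have nonempty intersection with $[-C_0 s^2/\eps,\, C_0 s^2/\eps]$.

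Finally, since $J = \R \setminus \bigcup_{m \in \Z} I^m_+$ is contained in the bounded interval $[-C_0 s^2/\eps,\, C_0 s^2/\eps]$, it equals the complement in that interval of the union of the $O(s/\eps)$ intervals $I^m_+ \cap [-C_0 s^2/\eps,\, C_0 s^2/\eps]$. The complement of a union of $N$ intervals inside a single interval is at most a union of $N+1$ intervals, so $J$ is a union of $O(s/\eps)$ intervals, as claimed. There is no real obstacle here: once boundedness of $J$ and the per-scale count of relevant indices are in hand, the conclusion is pure bookkeeping.
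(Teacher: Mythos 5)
Your proof is correct and follows essentially the same route as the paper's: bound $J$ inside $[-O(s^2/\eps), O(s^2/\eps)]$ via Corollary~\ref{JSupportCor}, then use Corollary~\ref{mSizeCor} to show only $O(s/\eps)$ of the intervals $I^m_+$ can intersect that range, and conclude by complementation. No material differences.
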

\begin{proof}
By Corollary \ref{JSupportCor}, $J$ is an interval $J_0 = [-O(s^2/\eps),O(s^2/\eps)]$ minus 
all of the intervals $I^m_+$ that intersect $J_0$. 
By Corollary \ref{mSizeCor}, $I_{+}^m$ intersects $J_0$ only when $|m| = O(s/\eps)$. 
Thus, $J$ is an interval minus a union of $O(s/\eps)$ other intervals. 
Thus, it is a union of $O(s/\eps)$ many intervals.
\end{proof}

We can now directly verify the properties of Proposition \ref{mainProp}.
\begin{itemize}
\item The definition of $J$ implies that $n_{+}(x)=0$ on $J$, which itself implies that $\D_{+}(x)=0$ for $x\in J$. The latter fact 
combined with Corollary \ref{DRatioCor} imply Property~\ref{prop:1}.

\item Lemma \ref{supportIntersectLem} implies that the intersection of $J^c$ with the support of $\D_-$ 
consists only of points $x$ with $|x| = \Omega(s^2/\eps)$. 
This fact and Corollary \ref{DBoundCor} imply Property~\ref{prop:2} \new{(by Gaussian concentration)}.

\item Property~\ref{prop:3} follows from Lemma \ref{DMomentsLem}.

\item Property~\ref{prop:4} follows from Corollary \ref{DBoundCor}.

\item Property~\ref{prop:5} follows from Lemma \ref{DL1Lem}.
\end{itemize}
This completes the proof of Proposition~\ref{mainProp}.
\end{proof}

\subsection{Putting Everything Together: Proof of Theorem~\ref{thm:main-formal}} \label{ssec:combine}
We now have the necessary ingredients to complete the proof of Theorem~\ref{thm:main-formal}.

\begin{proof}[Proof of Theorem~\ref{thm:main-formal}]
The proof leverages the SQ framework of Section~\ref{ssec:high-dim}
combined with the one-dimensional construction of Proposition \ref{mainProp}.

\paragraph{Parameter Setting}
Recall the parameters in the theorem statement.
We have that $\opt>0$ and $M \in \Z_+$ are such that
$\log(M)/(\log\log(M))^3$ is at least a sufficiently large constant multiple of $\log(1/\opt)$.
Moreover, we define a parameter 
$\tau$ which is set to $M^{-\new{\Theta}\left( \frac{\log(M)}{{\log\log(M)}^3}/\log(1/\opt) \right)}$,
\new{where the implied constant in the exponent is sufficiently small}.

Let $C>0$ be a sufficiently large universal constant.
We define positive integers $m$ and $d$ as follows:
$m = {\lceil C\log(1/\tau) \rceil}$
and $d =  {\lceil  C\sqrt{\log(1/\opt)\log(1/\tau)\log\log(1/\tau)} \rceil}$.
Observe that
\begin{equation}\label{eqn:binom}
\binom{2d+m}{m} \leq m^{2d} =\exp(O(C\sqrt{\log(1/\opt)\log(1/\tau)(\log\log(1/\tau))^3})) \;.
\end{equation}
We note that if $\log(1/\tau)$ is a sufficiently small constant multiple of 
$\frac{\log^2(M)}{(\log\log(M))^3 \log(1/\opt)}$,
then the RHS of~\eqref{eqn:binom} is less than $M$. Thus, by decreasing $M$ if necessary,
we can assume that $M=\binom{2d+m}{m}$.
Consider the Veronese mapping, denote by ${V}_{2d}:\R^m\rightarrow \R^M$,
such that the coordinate functions of ${V}_{2d}$ are exactly the monomials
in $m$ variables of degree at most $2d$.

\paragraph{Hard Distributions}
We can now formally construct the {family of high-dimensional distributions on labeled examples
that (1) corresponds to Massart halfspaces, and (2) is SQ-hard to learn}.
We define univariate measures $\D_{+}$ and $\D_{-}$ on $\R$,
as given by Proposition \ref{mainProp}, 
with $s$ and $\eps$ picked so that $s^2/\eps$ is a sufficiently large constant multiple 
of $\sqrt{\log(1/\opt)}$ and $s/\eps$ a sufficiently small constant multiple of $d$ 
(for example, by taking $s=C^2 \sqrt{\log(1/\opt)}/d = \Theta(1/\sqrt{\log(1/\tau)\log\log(1/\tau)})$ 
and $\eps = C^3 \sqrt{\log(1/\opt)}/d^2$). 

For a unit vector $v\in \R^m$, consider the distribution $\p_v^{\D_{+},\D_{-},p}$,
as in Proposition~\ref{prop:generic-sq}, with $p=\|\D_{+}\|_1/(\|\D_{+}\|_1 + \|\D_{-} \|_1)$.
\new{By property~\ref{prop:5} of Proposition~\ref{mainProp}, 
note that $\min (p, 1-p) = 1-p = \Theta(\eta)$.}

Our hard distribution is the distribution $(X',Y')$ on $\R^M \times \{\pm 1\}$ obtained
by drawing $(X,Y)$ from $\p_v^{\D_{+},\D_{-},p}$ and letting $X'={V}_{2d}(X)$ and $Y'=Y$.

We start by showing that this corresponds to a Massart halfspace.

\begin{claim}\label{clm:pv-Massart-LTF}
The distribution $(X',Y')$ on $\R^M \times \{\pm 1\}$  is a Massart LTF distribution
with optimal misclassification error $\opt$ and Massart noise rate upper bound of $\eta$.
\end{claim}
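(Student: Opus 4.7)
The strategy is to exhibit a concrete degree-$2d$ polynomial threshold function on $\R^m$ whose pullback through the Veronese map $V_{2d}$ is the target linear threshold function on $\R^M$, and then verify directly that the conditional label distribution satisfies the Massart property with the promised error bound.

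First, I will compute the conditional probabilities $\Pr[Y\mid X=x]$ under $\p_v^{\D_+/\|\D_+\|_1,\,\D_-/\|\D_-\|_1,\,p}$ with the chosen weight $p=\|\D_+\|_1/(\|\D_+\|_1+\|\D_-\|_1)$. Writing out the mixture density in Definition~\ref{def:class} and canceling the common perpendicular Gaussian factor yields
\[
\Pr[Y=+1\mid X=x] \;=\; \frac{\D_+(v\cdot x)}{\D_+(v\cdot x)+\D_-(v\cdot x)},\qquad \Pr[Y=-1\mid X=x] \;=\; \frac{\D_-(v\cdot x)}{\D_+(v\cdot x)+\D_-(v\cdot x)}\,.
\]
Next, recall from Definition~\ref{def:J} and the subsequent lemma that $J=\bigsqcup_{i=1}^{k}[a_i,b_i]$ is a disjoint union of $k=O(s/\eps)\le d$ bounded intervals. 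Set $q(t):=\prod_{i=1}^{k}(t-a_i)(t-b_i)$; this is a univariate polynomial of degree $2k\le 2d$ with leading coefficient $+1$ and $2k$ simple real roots exactly at the endpoints of the intervals comprising $J$. Elementary sign analysis shows $q>0$ on the unbounded components of $J^c$ and the sign alternates so that $q<0$ precisely on $J$. Define the target $f(x):=\sgn\bigl(q(v\cdot x)\bigr)$, which is $+1$ on $\{v\cdot x\in J^c\}$ and $-1$ on $\{v\cdot x\in J\}$. Since $q(v\cdot x)$ is a polynomial in $x$ of total degree $\le 2d$, it is a linear functional of the coordinates of $V_{2d}(x)=X'$, so $f$ coincides with a halfspace on $\R^M$.

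I will then verify the Massart condition. For $v\cdot x\in J$, Property~\ref{prop:1}(a) gives $\D_+(v\cdot x)=0$, so $Y=-1=f(x)$ almost surely and the flip probability is $0\le\eta$. For $v\cdot x\in J^c$, Property~\ref{prop:1}(b) yields $\D_+(v\cdot x)/\D_-(v\cdot x)>(1-\eta)/\eta$, so
\[
\Pr[Y\neq f(x)\mid X=x] \;=\; \Pr[Y=-1\mid X=x] \;=\; \frac{\D_-}{\D_++\D_-} \;<\; \eta\,.
\]
Thus $\eta(x)\le\eta$ pointwise, which is exactly the Massart condition for the halfspace $f$ on $\R^M$.

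Finally, the optimal misclassification error is bounded by the error of $f$ itself, which after integrating out the orthogonal Gaussian coordinates collapses to
\[
\Pr[Y\neq f(X)] \;=\; \frac{1}{\|\D_+\|_1+\|\D_-\|_1}\int_{J^c}\D_-(t)\,dt\,.
\]
By Property~\ref{prop:5} the denominator is $\Theta(1)$, and by Property~\ref{prop:2} the numerator is at most $\zeta=O(\eta s/\eps)\exp(-\Omega(s^4/\eps^2))$. With the parameter choice $s^2/\eps$ a sufficiently large constant multiple of $\sqrt{\log(1/\opt)}$, the exponential factor dominates the polynomial prefactor and this is $\le\opt$. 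The main subtlety (and the only non-routine step) is the sign analysis for $q$ ensuring $f$ is genuinely a degree-$\le 2d$ PTF so that it lifts to an LTF through $V_{2d}$; beyond that, both the Massart bound and the $\opt$-bound fall out of Properties~\ref{prop:1}, \ref{prop:2}, and \ref{prop:5} of Proposition~\ref{mainProp}.
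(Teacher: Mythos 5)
Your proposal is correct and follows essentially the same line of reasoning as the paper's own proof. The paper identifies the target PTF $g_v(x)=-1\iff v\cdot x\in J$ and observes that, being a $(2d+1)$-piecewise constant function of $v\cdot x$, it is a degree-$2d$ PTF that lifts to an LTF on $\R^M$ through $V_{2d}$; you instead exhibit the sign-determining polynomial $q(t)=\prod_i(t-a_i)(t-b_i)$ explicitly, which makes the PTF claim concrete but adds nothing essentially new. The conditional-probability computation, the use of Property~\ref{prop:1} to get the pointwise flip probability $\leq \eta$, and the use of Properties~\ref{prop:2} and~\ref{prop:5} (together with the parameter choice for $s^2/\eps$) to bound $\opt$ all match the paper's argument.
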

\begin{proof}
{For a unit vector $v \in \R^m$,} let $g_{{v}}: \R^m \to \{\pm 1\}$ be defined as
$g_{{v}}(x)=-1$ if and only if $v \cdot x \in J$, {where $J$ is the union of intervals in the construction
of Proposition~\ref{mainProp}}. Note that $g_{{v}}$ is a degree-$2d$ PTF {on $\R^m$},
{since $g_v$ is a $(2d+1)$-piecewise constant function of $v \cdot x$}.
Therefore, there exists some LTF $L:\R^M \to \{\pm 1\}$ such that
$g_{{v}}(x)=L({V}_{2d}(x))$ for all $x \in \R^m$.

Note that our hard distribution returns $(X',Y')$ with $Y'=L(X')$,
unless it picked a sample corresponding to a sample of $\D_{-}$ coming from $J^c$,
which happens with probability at most $\zeta < \opt$.
Additionally, suppose that our distribution returned a sample with $X'={V}_{2d}(X)$, for some $X\in \R^m$.
By construction, conditioned on this {event}, we have that $Y'=1$
with probability proportional to $\D_{+}(v\cdot X)$,
and $Y'=-1$ with probability proportional to $\D_{-}(v\cdot X)$.
We note that if $L({V}_{2d}(X))=1$, then $v\cdot X \not\in J$;
so, by Proposition~\ref{mainProp} property~\ref{prop:1}(b), this ratio is at least $1-\eta:\eta$.
On the other hand, if $L({V}_{2d}(X))=-1$, then $v\cdot X\in J$, so $\D_{+}(v\cdot X)=0$.
This implies that the pointwise probability of error $\eta(X')$ is at most \new{$\eta$}, 
completing the proof of the claim.
\end{proof}

We are now ready to complete the proof of our SQ lower bound.
It is easy to see that finding a hypothesis that predicts $Y'$ given $X'$
is equivalent to finding a hypothesis for $Y$ given $X$
(since $Y=Y'$ and there is a known 1-1 mapping between $X$ and $X'$).
The pointwise bounds on $\D_{+}$ and $\D_{-}$, {specifically properties~\ref{prop:4} and~\ref{prop:5}
in Proposition~\ref{mainProp},}
imply that
$$\chi^2(\D_{\pm}/\|\D_{\pm}\|_1 \, , \, G) \leq O(s/\eps)^2 = \mathrm{polylog}(M) \;.$$
The parameter $\nu$ in Proposition \ref{prop:generic-sq} is
$k!\exp(-\Omega(1/s^2)) = \exp(-\Omega(1/s^2))$ after 
taking $k$ to be a sufficiently small constant multiple $\log(1/s)/s^2$.

Thus, by Proposition \ref{prop:generic-sq}, \new{in order to output a hypothesis with
error smaller than $\min(p, 1-p) = \Theta(\eta)$,}
any SQ algorithm either needs queries with accuracy better than
$$
\nu^2 + 2^{-k}(\chi^2(A,G)+\chi^2(B,G)) = \exp(-\Omega(\log(1/s)/s^2))\mathrm{polylog}(M) < \tau
$$
or a number of queries more than
$$
2^{\Omega(m)}\tau (\chi^2(A,G)+\chi^2(B,G)) > 1/\tau \;.
$$
Therefore, Proposition \ref{prop:generic-sq} implies that it is impossible for an SQ algorithm
to learn a hypothesis with error better than $\Theta(\eta)$
without either using queries of accuracy better than $\tau$
or making at least $1/\tau$ many queries.
This completes the proof of Theorem~\ref{thm:main-formal}.
\end{proof}

\subsection{Obtaining Optimal Error: The Case of Large $\eta$} \label{ssec:large-eta}

\new{
In this final subsection, we refine the construction of the previous subsections to obtain
a sharp lower bound of $\eta-o_{\eta}(1)$, when $\eta$ is close to $1/2$. 
Here the term $o_{\eta}(1)$ goes to zero when $\eta$ approaches $1/2$. }
Specifically, we show:

\begin{theorem}[Sharp SQ Hardness of Massart Halfspaces for Large $\eta$] \label{thm:main-large-eta}
Let $\opt>0$ and $M \in \Z_+$ be such that
$\log(M)/(\log\log(M))^3$ is at least a sufficiently large constant multiple of $\log(1/\opt)$.
\new{Let $c>0$ be any parameter such that $c \gg \sqrt{1/2-\eta}$.}
There exists a parameter $\tau \eqdef M^{-\Omega_{\new{c}}\left( \frac{\log(M)}{{\log\log(M)}^3}/\log(1/\opt) \right)}$
such that no SQ algorithm can learn the class of halfspaces on $\R^M$
in the presence of $\eta$-Massart noise, \new{where $\opt< \eta \leq 1/2$,} within
error better than $1/2 - \new{c}$ using at most $1/\tau$ queries of tolerance $\tau$. 
This holds even if the optimal classifier has misclassification error at most $\opt$.
\end{theorem}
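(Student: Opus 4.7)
The plan is to follow the proof of Theorem~\ref{thm:main-formal} verbatim, substituting a rebalanced version of the univariate construction of Proposition~\ref{mainProp} in place of the original one. The generic SQ hardness bound of Proposition~\ref{prop:generic-sq} yields an error lower bound of $\min(p,1-p)-4\sqrt{\tau}$, where $p = \|\D_+\|_1/(\|\D_+\|_1+\|\D_-\|_1)$. In the construction used for Theorem~\ref{thm:main-formal} we have $\|\D_+\|_1 = \Theta(1)$ and $\|\D_-\|_1 = \Theta(\eta)$, so $\min(p,1-p) = \Theta(\eta)$ and the final bound is only $\Omega(\eta)$. To push the bound toward $1/2$ as $\eta\to 1/2$, I would rebalance the two measures so that their $L^1$ norms are in the ratio $(1-\eta):\eta$, which forces $p\approx 1-\eta$ and $\min(p,1-p)\approx \eta = 1/2 - (1/2-\eta)$.

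Concretely, the modification is to drop the explicit $\eta$ factor from $\D_-$ in~(\ref{eq:d-minus}), so that $\|\D_-\|_1 = \Theta(1)$, and to replace the universal constant $C$ in~(\ref{eq:d-plus}) by a prefactor of the form $\kappa\cdot(1-\eta)$ for a suitable $\kappa\geq 1$ chosen just barely large enough to preserve the pointwise Massart ratio $\D_+/\D_-\geq(1-\eta)/\eta$ on $\mathrm{supp}(\D_+)$. Since multiplying either measure by a bounded constant does not affect the moment-matching error (after normalization) or the normalized $\chi^2$-divergence with the Gaussian, properties~\ref{prop:2}--\ref{prop:5} of Proposition~\ref{mainProp} carry over with the same parameters up to bounded factors, so the SQ framework applies with the same $\nu$ and the same polylogarithmic chi-squared bounds. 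The key observation is that the target ratio $(1-\eta)/\eta = 1+2(1/2-\eta)+O((1/2-\eta)^2)$ approaches $1$ as $\eta\to 1/2$, so the pointwise Massart condition becomes much milder in this regime and $\kappa$ can be taken close to $1$.

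A careful reworking of Corollary~\ref{DRatioCor}---which only establishes $\D_+/\D_- = \Omega(C/\eta)$ with an unspecified constant hidden in the $\Omega(\cdot)$---shows that with a small refinement of the discrete-Gaussian offsets (or of the spacing parameters $s,\eps$) the worst-case ratio of $n_-(x)/n_+(x)$ on $\mathrm{supp}(\D_+)$ can be driven down to $1+O(\sqrt{1/2-\eta})$ uniformly. This makes $\kappa = 1+O(\sqrt{1/2-\eta})$ admissible and, after expanding $p = \kappa(1-\eta)/(\kappa(1-\eta)+1)$ to leading order in $1/2-\eta$, yields $\min(p,1-p) = 1/2-O(\sqrt{1/2-\eta})$. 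Combining with $\tau$ polynomially small in $M$ (with the exponent now absorbing the $c$-dependence so that $4\sqrt{\tau}\ll \sqrt{1/2-\eta}$), Proposition~\ref{prop:generic-sq} gives an error lower bound of $1/2-O(\sqrt{1/2-\eta})$, implying the claimed $1/2-c$ bound for any $c\gg \sqrt{1/2-\eta}$. The remaining ingredients---the Veronese lifting to $\R^M$, the $\mathrm{polylog}(M)$ chi-squared bound, the choice of moment parameter $k$, and the verification that the resulting high-dimensional distribution is indeed a Massart LTF distribution---are unchanged from the proof of Theorem~\ref{thm:main-formal}. I expect the main technical obstacle to be exactly this sharp pointwise analysis of $n_-(x)/n_+(x)$, since the existing proof of Corollary~\ref{DRatioCor} loses at least a constant factor in the ratio; sharpening it may require a slight modification of the pair of offsets $y/2$ and $(y+s)/2$ appearing in the definitions of $\D_+$ and $\D_-$ so that the transitional cases where $n_+=1$ and $n_-\geq 2$ are avoided in favor of configurations where $n_+\approx n_-$.
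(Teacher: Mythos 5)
Your plan follows the right high-level strategy---rebalance $\|\D_+\|_1/\|\D_-\|_1$ so that $\min(p,1-p)$ approaches $1/2$---but it misses the central new idea in the paper's proof and, as written, runs into a quantitative wall.

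You keep $J=\mathrm{supp}(\D_+)^c$ and require $\D_+/\D_-\ge(1-\eta)/\eta$ on $\mathrm{supp}(\D_+)$. By Corollary~\ref{DSizeCor}, $\D_+(x)/\D_-(x)\approx C\,n_+(x)/n_-(x)$, and in the transition region where the $I_+^m$ intervals first begin to overlap, there are points with $n_+(x)=1$ and $n_-(x)=2$, giving $\D_+/\D_-\approx C/2$. Since $n_+(x)$ increases through every value $1,2,\ldots$ as $|x|$ grows, these small-$n_+$ configurations are unavoidable---no choice of the offsets $y/2$ and $(y+s)/2$ eliminates them, because any two alternating families of intervals with slowly growing, equal widths must pass through the regime where one family is singly covered and the other is doubly covered. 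To maintain $\D_+/\D_-\ge(1-\eta)/\eta$ there one is forced to take $C\gtrsim 2(1-\eta)/\eta\ge 2$, which then gives $\|\D_+\|_1/\|\D_-\|_1\gtrsim 2$, $p\gtrsim 2/3$, and $\min(p,1-p)\lesssim 1/3$---nowhere near $1/2-o(1)$. (Even in your own notation, $\kappa(1-\eta)/(\kappa(1-\eta)+1)$ with $\kappa\to 1$, $\eta\to 1/2$ gives $p\to 1/3$, so the formula you wrote down does not tend to $1/2$.)

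The paper's actual construction is qualitatively different at exactly this point: it abandons the requirement $\D_+/\D_-\ge(1-\eta)/\eta$ on all of $\mathrm{supp}(\D_+)$, and instead arranges that $\D_+(x)/\D_-(x)$ \emph{never falls in the interval} $[\eta/(1-\eta),(1-\eta)/\eta]$, redefining $J=\{x:\D_-(x)>\D_+(x)\}=\{x: m_0\ge n_-(x)>n_+(x)\}$ so that the target PTF is $-1$ on the transitional points too. This exploits the fact that the possible values of $n_+(x)/n_-(x)$ form a discrete set $\{1\}\cup\{1\pm 1/m\}$ and that the multiplicative constant $C$ can be tuned so that the ``dead zone'' $[\eta/(1-\eta),(1-\eta)/\eta]$ sits between two consecutive values $C(1-1/m_0)$ and $C(1-1/(m_0+1))$. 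That tuning forces $m_0\lesssim\min(s/\eps,1/\sqrt{1/2-\eta})$ and gives $C=1+1/m_0$, hence $\|\D_+\|_1/\|\D_-\|_1=1+1/m_0$ and $\min(p,1-p)=1/2-\Theta(1/m_0)=1/2-O(\sqrt{1/2-\eta})$. It also changes the degree of the PTF ($J$ is now $O(m_0 s/\eps)$ intervals) and requires re-verifying the Massart condition on both $J$ and $J^c$. Without this re-definition of $J$ and the accompanying arithmetic on the rational values $n_+/n_-$, the proof does not go through.
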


\new{Conceptually, Theorem~\ref{thm:main-large-eta} provides evidence that {\em even the constant factor} 
(of $1$) in the error guarantee (of $\eta+\eps$) achieved by the Massart learner of~\cite{DGT19} 
cannot be improved in general.}

The proof of Theorem~\ref{thm:main-large-eta} proceeds along the same lines
as the proof of Theorem~\ref{thm:main-formal}. 
The main difference is in the choice of the one-dimensional moment-matching distributions. 
For this, we use a construction that is qualitatively similar (though 
somewhat more sophisticated) 
to that used in the proof of Section~\ref{ssec:one-dim}.

Specifically, for some carefully chosen parameter $C>0$ (to be determined),
we define the positive measures:
$$
\D_+ := C \, (s/\eps) \,  \int_{0}^{\eps} \frac{G_{s+y, y/2}}{s+y} dy \;,
$$
and
$$
\D_- := (s/\eps) \, \int_{0}^{\eps} \frac{G_{s+y,(y+s)/2}}{s+y} dy \;.
$$ 
As a refinement of Corollary \ref{DSizeCor}, we obtain the following. 
\begin{corollary} For all $x \in \R$, we have that
$$
\D_+(x) = C g(x) (s^2/\eps) n_+(x)/(|x|+1)(1+O(\eps/(|x|+1))) 
$$
and
$$
\D_-(x) = g(x) (s^2/\eps) n_-(x)/(|x|+1)(1+O(\eps/(|x|+1))) \;.
$$
\end{corollary}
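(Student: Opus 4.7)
\medskip

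\noindent\textbf{Proof proposal.} The plan is to mirror the two-step proof of Corollary~\ref{DSizeCor}, but with a sharper, non-asymptotic analysis of the reciprocal weights $1/|n+1/2|$ that appear in the explicit-sum representation of $\D_{\pm}$. The first step is to obtain an exact sum formula, and the second step is a Taylor-style estimate of $n+1/2$ as a function of $x$ that is precise enough to yield a $(1+O(\eps/(|x|+1)))$ multiplicative error rather than just a $\Theta(\cdot)$ bound.

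\emph{Step 1: explicit-sum formulas.} The new measures in this subsection have the same functional form as the $\D_{\pm}$ of Section~\ref{ssec:one-dim} (the only change is that the $\eta$ prefactor has been removed from $\D_{-}$). Consequently, the derivation in the proof of Lemma~\ref{lem:formulas-d} carries over verbatim: performing the substitution $x=(n+1/2)y+ns$ (respectively $x=(n+1/2)y+(n+1/2)s$) inside the integral and using $dy=dx/|n+1/2|$ gives
\[
\D_{+}(x)=Cg(x)(s/\eps)\sum_{n\in\Z}\frac{\mathbf{1}\{x\in I_{+}^{n}\}}{|n+1/2|},\qquad \D_{-}(x)=g(x)(s/\eps)\sum_{n\in\Z}\frac{\mathbf{1}\{x\in I_{-}^{n}\}}{|n+1/2|}.
\]

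\emph{Step 2: sharp estimate of $|n+1/2|$.} For $n$ with $x\in I_{+}^{n}$, by definition there is a unique $y\in[0,\eps]$ with $x=ns+(n+1/2)y$, equivalently $n+1/2=(x+s/2)/(s+y)$. Since $y\in[0,\eps]$, I get the \emph{identity}
\[
|n+1/2|\;=\;\frac{|x|+s/2}{s}\,\Bigl(1+O\bigl(\eps/(|x|+1)\bigr)\Bigr),
\]
(with an analogous identity for $x\in I_{-}^{n}$), valid uniformly in $n$ and $x$. By Corollary~\ref{mSizeCor}, the $n_{+}(x)$ different indices $n$ with $x\in I_{+}^{n}$ all lie within $O((|x|+s)\eps/s)$ of $x/s$, so their $|n+1/2|$ values coincide up to the same multiplicative error $1+O(\eps/(|x|+1))$. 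Substituting into the sum formula yields
\[
\D_{+}(x)=Cg(x)(s/\eps)\cdot n_{+}(x)\cdot\frac{s}{|x|+s/2}\bigl(1+O(\eps/(|x|+1))\bigr),
\]
which, after absorbing the order-unity constant $s/2$ into the normalization, is the stated formula; the argument for $\D_{-}(x)$ is identical modulo the missing factor $C$.

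\emph{Main obstacle.} The delicate point is keeping the multiplicative error uniform across the whole real line. For $|x|\gg s$ the approximation is driven by the $O(\eps/|x|)$ slop in solving $n=(x-y/2)/(s+y)$, but for $|x|$ small (say $|x|\lesssim s$) one must verify separately that $|n+1/2|=\Theta(1)$ matches the claimed $(|x|+1)$-denominator up to a $1+O(\eps)$ factor; once this boundary case is handled, the rest follows by the Taylor expansion above. The other minor technical point is verifying that the implicit constants in Corollary~\ref{mSizeCor} do not inflate the $\eps/(|x|+1)$ bound when transferred across all $n_{+}(x)$ active indices — this is immediate from Lemma~\ref{nSizeLem}, which ensures those indices form a contiguous block whose extreme members differ from the typical one by $O(1)$, causing only a $1+O(1/n_{+}(x))=1+O(\eps/(|x|+1))$ discrepancy.
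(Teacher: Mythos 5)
Your proposal takes the same route as the paper: the paper's entire proof is to cite the explicit-sum formulas of Lemma~\ref{lem:formulas-d} (with the $\eta$ prefactor removed) and assert that each contributing weight $1/|m+1/2|$ is $s/(|x|+1)$ up to a factor $1+O(\eps/(|x|+1))$, which is exactly your Step 2; your identity $n+1/2=(x+s/2)/(s+y)$ for $x\in I_{+}^{n}$ (and $m+1/2=x/(s+y)$ for $x\in I_{-}^{m}$) is a cleaner derivation of that assertion than the paper gives. One quantitative caveat, though: since the only slack is $y\in[0,\eps]$, your identity actually yields $|n+1/2|=\frac{|x+s/2|}{s}(1+y/s)^{-1}=\frac{|x+s/2|}{s}\bigl(1+O(\eps/s)\bigr)$, and because $s<1\leq|x|+1$ the true relative error $O(\eps/s)$ is strictly larger than the $O(\eps/(|x|+1))$ you claim as an ``identity''; summing over the $n_{+}(x)$ contributing indices cannot improve this, since all the deviations $y_n$ have the same sign. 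Similarly, the leading factor is $s/|x+s/2|$, which for $|x|=O(1)$ differs from $s/(|x|+1)$ by an unbounded factor $\Theta(1/s)$, so ``absorbing $s/2$ into the normalization'' does not go through near the origin, and your closing estimate $1/n_{+}(x)=O(\eps/(|x|+1))$ is also off (by Lemma~\ref{nSizeLem} it is roughly $s^2/(\eps|x|)$ for large $|x|$). To be fair, these imprecisions are present in the corollary as printed and in the paper's own one-line justification; the bound your computation honestly delivers --- relative error $1+O(\eps/s)$ with leading denominator $|x|+s$ as in Corollary~\ref{DSizeCor} --- is what the subsequent ratio argument actually uses, so your proof captures the substance of the paper's.
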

\begin{proof}
This follows from the explicit formulas for $\D_+$ and $\D_-$ (Lemma~\ref{lem:formulas-d})
along with the fact that for 
$x\in I_m^{\pm}$, $1/|m|$ and $1/|m+1/2|$ are $s/(|x|+1)(1+O(\eps/(|x|+1)))$.
\end{proof}

Using the above corollary, we obtain the following.

\begin{corollary}
For all $x \in \R$, we have that 
$\D_+(x)/\D_-(x) = C \, q(x)(1+O(\eps/(|x|+1)))$, where 
$q(x) = n_{+}(x)/n_{-}(x)$ is a rational number 
with numerator and denominator at most $O(|x|/s+1)$.
\end{corollary}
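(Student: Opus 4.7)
The proof is essentially a one-line calculation obtained by dividing the two approximations in the previous corollary, so my plan is simply to carry out that division and then identify each factor with its claimed role in the statement. First, I would write
\[
\frac{\D_+(x)}{\D_-(x)} = \frac{C\, g(x)\,(s^2/\eps)\,n_+(x)/(|x|+1)\,\bigl(1+O(\eps/(|x|+1))\bigr)}{g(x)\,(s^2/\eps)\,n_-(x)/(|x|+1)\,\bigl(1+O(\eps/(|x|+1))\bigr)},
\]
cancel the common factors $g(x)$, $s^2/\eps$, and $1/(|x|+1)$, and collect the remaining multiplicative error terms. For the latter, I would use the elementary fact that $(1+O(z))/(1+O(z)) = 1+O(z)$ whenever $|z|$ is bounded by a small universal constant, applied with $z = \eps/(|x|+1)$ (which holds since $\eps \ll s \leq 1$). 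Setting $q(x) := n_+(x)/n_-(x)$ then produces the claimed identity $\D_+(x)/\D_-(x) = C\,q(x)\bigl(1+O(\eps/(|x|+1))\bigr)$.

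For the second half of the statement, I would appeal to Corollary \ref{mSizeCor}: any integer $m$ with $x \in I_+^m$ satisfies $m = x/s + O((|x|+s)\eps/s)$, and similarly for $x \in I_-^m$ we have $m = x/s - 1/2 + O((|x|+s)\eps/s)$. In particular, all such $m$ have $|m| = O(|x|/s + 1)$, so the counts $n_+(x)$ and $n_-(x)$ themselves satisfy $n_\pm(x) = O(|x|/s + 1)$. Since these counts are nonnegative integers, the rational number $q(x)$ has both numerator and denominator bounded by $O(|x|/s+1)$, as required.

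I do not anticipate any substantive obstacle; the only minor subtleties are the implicit assumption that $n_-(x) > 0$ (so that the ratio $q(x)$ is well-defined — this is guaranteed on the intersection of supports by Lemma \ref{supportIntersectLem}), and ensuring that the big-$O$ term inside the error factor is small enough for the geometric-series simplification to go through, which follows from the regime $\eps \ll s$ fixed at the outset of the construction.
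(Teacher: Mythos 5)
Your proof is correct and is the natural (indeed essentially forced) argument: the paper states this corollary without a proof, treating it as an immediate consequence of the preceding corollary, and you carry out exactly that computation. Dividing the two displayed approximations, cancelling $g(x)$, $s^2/\eps$, and $1/(|x|+1)$, and simplifying $\bigl(1+O(\eps/(|x|+1))\bigr)/\bigl(1+O(\eps/(|x|+1))\bigr)=1+O(\eps/(|x|+1))$ (valid since $\eps < s < 1$) gives the claimed identity, and the bound on $n_\pm(x)$ follows from Corollary~\ref{mSizeCor} as you say: the admissible indices $m$ all have $|m|=O(|x|/s+1)$, so there are at most $O(|x|/s+1)$ of them. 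One cosmetic nitpick: your invocation of Lemma~\ref{supportIntersectLem} to justify $n_-(x)>0$ is slightly off --- that lemma only tells you \emph{when} the two supports can overlap, not that $n_-(x)>0$; the cleaner reading is that the corollary is implicitly restricted to $x$ in the support of $\D_-$ (equivalently $n_-(x)\geq 1$), which is the only place the ratio is used.
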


We want to guarantee that for all $x\in \R$ it holds that 
$\D_+(x)/\D_-(x) \not \in [\eta/(1-\eta),(1-\eta)/\eta]$. 
We note that this condition automatically holds for $|x|$ less than a sufficiently 
small constant multiple of $s^2/\eps$, 
as in this range we have that $\min(n_+(x),n_-(x))=0$. 
For points $x$ outside this range, we have that 
$\D_{+}(x)/\D_{-}(x) = C \, q(x) (1+O(\eps/s)^2)$. 
Furthermore, since $|n_+(x)-n_-(x)|\leq 1$, the latter 
implies that in this range $\D_+(x)/\D_{-}(x)$ is always one of:
\begin{itemize}
\item $C(1+O(\eps/s)^2)$ \;,
\item $C(1+1/m)(1+O(\eps/s)^2)$, for some integer $m$, 
\item $C(1-1/m)(1+O(\eps/s)^2)$, for some integer $m$.
\end{itemize}
We will arrange that this quantity is always in the appropriate range 
by picking the parameter $C$, 
so that for some well chosen $m_0$ 
we have that
$$
C(1-1/m_0)(1+O(\eps/s)^2) \leq \eta/(1-\eta) \textrm{, and   } C(1-1/(m_0+1))(1-O(\eps/s)^2) \geq (1-\eta)/\eta \;.
$$
If the above holds, it is easy to see that $\D_+(x)/\D_{-}(x)$ will never be in the range 
$[\eta/(1-\eta),(1-\eta)/\eta]$ for any value of $x$. In order to arrange this, we set $C$ to satisfy
$$
C(1-1/m_0)(1+O(\eps/s)^2) = \eta/(1-\eta) \;.
$$
In order for the second condition to hold, it must be the case that
$$
\left(\frac{1-1/(m_0+1)}{1-1/m_0} \right)(1-O(\eps/s)^2) > ((1-\eta)/\eta)^2 = 1+O(1/2-\eta) \;.
$$
For the latter to be true, it must hold that $1/m_0^2 $ is at least 
a sufficiently large constant multiple of $(1/2-\eta)+(\eps/s)^2$, 
or that $m_0$ is at most a sufficiently small constant 
multiple of $\min(s/\eps, \sqrt{1/2-\eta})$.

In particular, if we take $m_0$ to be \new{at most} a sufficiently small constant 
multiple of $1/\sqrt{1/2-\eta}$ and ensure that $\eps/s$ is sufficiently small, 
this construction can be made to work with $C= 1+ \new{1/m_0}$. 

We then let $J$ be the set of points $x \in \R$ for which $\D_{-}(x) > \D_{+}(x)$. 
It is easy to see that $J =\{x: m_0 \geq n_-(x) > n_+(x)\}$, 
and from this it can be seen that $J$ is a union of $O(m_0 s/\eps)$ intervals. 
As before, $\D_{+}$ and $\D_{-}$ approximately match many moments 
with a Gaussian and the mass of $\D_{+}$ on $J$ and $\D_{-}$ on $J^c$ 
are both supported on points $x$ such that 
$|x| \geq \Omega(s^2/\eps)$, and thus have mass $\exp(-\Omega(s^4/\eps^2))$. 

Furthermore, we have that $\D_{-}(x)/\D_{+}(x) > (1-\eta)/\eta$ for $x \in J$ and 
$\D_{+}(x)/\D_{-}(x) > (1-\eta)/\eta$ for $x \in J^c$. Therefore, the appropriate hidden-direction 
distribution is a degree-$O(m_0 s/\eps)$ PTF with at most $\eta$ Massart noise.

Finally, it is not hard to see that $\|\D_+\|_1/\|\D_-\|_1 = 1+\new{1/m_0}$. 
Therefore, \new{by following the arguments of Section~\ref{ssec:combine} mutatis-mutandis},
it follows that for any constant $\eta<1/2$ it is SQ-hard to learn an LTF with $\eta$-Massart noise 
to error better than \new{$1/2-c$ for any $c \gg \sqrt{1/2-\eta}$}, 
even when $\opt$ is almost polynomially small in the dimension.
This completes the proof of Theorem~\ref{thm:main-large-eta}. \qed

\section{Conclusions and Future Work} \label{sec:conc}

In this paper, we gave a super-polynomial Statistical Query (SQ) lower bound
with near-optimal inapproximability gap for the fundamental problem of 
(distribution-free) PAC learning Massart halfspaces. 
Our lower bound provides strong evidence that known algorithms 
for this problem are essentially best possible. 
An obvious technical open question is whether the hidden constant factor 
in the $\Omega(\eta)$-term of our lower bound 
can be improved to the value $C=1$ for all $\eta>0$.
(Recall that we have shown such a bound for $\eta$ close to $1/2$.) 
This would match known algorithms {\em exactly}, specifically showing that the error of $\eta+\eps$
cannot be improved even for small values of $\eta>0$.

Interestingly, SQ lower bounds are the {\em only known} evidence of hardness 
for our Massart halfspace learning problem. Via a recent reduction~\cite{BBHLS20}, our SQ
lower bound implies a similar low-degree polynomial testing lower bound for the problem. 
An interesting open question is to prove similar hardness results against
families of convex programming relaxations (obtained, e.g., via the Sum-of-Squares framework).
Such lower bounds would likely depend on the underlying optimization formulation of the learning problem. 

A related question is whether one can establish reduction-based computational hardness for learning
halfspaces in the presence of Massart noise. Daniely~\cite{Daniely16} gave such a reduction for the (much more challenging) problem of agnostically learning halfspaces, 
starting from the problem of strongly refuting random XOR formulas. 
It currently remains unclear whether the latter problem is an appropriate starting point 
for proving hardness in the Massart model. That said, obtaining reduction-based hardness 
for learning Massart halfspaces is left as an interesting open problem.

\clearpage

\bibliographystyle{alpha}
\bibliography{allrefs}

\newpage

\appendix

\section*{Appendix}

\section{Proof of Lemma~\ref{lem:cor}}\label{app:cor}

Let $\theta$ be the angle between $v$ and $v'$.
By making an orthogonal change of variables, we can reduce to the case
where $v=(1,0,\ldots,0)$ and $v'=(\cos(\theta),\sin(\theta),0,0,\ldots,0)$.
Then by definition we have that $\chi_{N(0,I)}(\p_v, \p_{v'})+1$ is
$$
\int_{\R^m} \left( \frac{A(x_1)A(\cos(\theta)x_1+\sin(\theta)x_2)g(x_2)g(\sin(\theta)x_1-\cos(\theta)x_2)}{g(x_1)g(x_2)} \right)g(x_3)\cdots g(x_m) dx_1\cdots dx_m \;.
$$
Noting that the integral over $x_3,\ldots,x_m$ separates out,
we are left with
$$
\int_{\R^2} \left( \frac{A(x)A(\cos(\theta)x+\sin(\theta)y)g(y)g(\sin(\theta)x-\cos(\theta)y)}{g(x)g(y)} \right)dxdy \;.
$$
Integrating over $y$ gives
$$
\int \frac{A(x)}{g(x)}\left(\int  A(\cos(\theta)x+\sin(\theta)y)g(\sin(\theta)x-\cos(\theta)y) dy\right) dx = \int \frac{A(x)U_{\cos(\theta)} A(x)}{g(x)}dx \;,
$$
where $U_t$ is the Ornstein-Uhlenbeck operator.
We will simplify our computations by expressing the various quantities
in terms of the eigenbasis for this operator.

In particular, let $h_n(x) = He_n(x)/\sqrt{n!}$ where $He_n(x)$ is the probabilist's Hermite polynomial.
We note the following basic facts about them:
\begin{enumerate}
\item $\int_\R h_i(x)h_j(x)g(x)dx = \delta_{i,j}.$
\item $U_t (h_n(x)g(x)) = t^n h_n(x)g(x).$
\end{enumerate}
We can now write $A(x)$ in this basis as
$$
A(x) = \sum_{n=0}^\infty a_n h_n(x)g(x) \;.
$$
From this, we obtain that
\begin{align*}
\chi^2(A,N(0,1)) &= \int_\R \left( \sum_{n=0}^\infty a_n h_n(x)g(x)\right)^2/g(x)dx\\
& = \int_\R \sum_{n,m=0}^\infty a_n a_m h_n(x)h_m(x) g(x)dx\\
& = \sum_{n=0}^\infty a_n^2 \;.
\end{align*}
Furthermore, we have that
$$
\int_\R h_s(x)A(x)dx = \int_\R\sum_{n=0}^\infty a_n h_s(x)h_n(x)g(x)dx = a_s \;.
$$
For $1\leq s\leq k$, we have that
$$
h_s(x)=\sqrt{s!}\sum_{t=0}^{\lfloor s/2\rfloor} \frac{(-1)^tx^{s-2t}}{2^t t!(n-2t)!} \;.
$$
We therefore have that
$$
a_s = \sum_{t=0}^{\lfloor s/2\rfloor}\left( \frac{\sqrt{s!}(-1)^tx^{s-2t}}{2^t t!(s-2t)!} \right) \E[A^{s-2t}] \;.
$$
Note that the above is close to
$$
\sum_{t=0}^{\lfloor s/2\rfloor}\left( \frac{\sqrt{s!}(-1)^tx^{s-2t}}{2^t t!(s-2t)!} \right) \E[G^{s-2t}] = \E[h_s(G)] = 0 \;.
$$
In particular, the difference between the two quantities is at most
$$
\nu \, \sum_{t=0}^{\lfloor s/2\rfloor}\left( \frac{\sqrt{s!} }{2^t t!(s-2t)!} \right) \;.
$$
It is easy to see that the denominator is minimized when $t = s/2 - O(\sqrt{s})$.
From this it follows that this sum is $2^{O(s)} \, \nu$.
Therefore, we have that $a_s=2^{O(s)} \, \nu$, for $1\leq s\leq k$.
Furthermore, $a_0 = \int A(x)dx = 1$. Thus, we have that
\begin{align*}
\chi_{N(0,I)}(\p_v, \p_{v'})+1 & = \int_\R \frac{A(x)U_{v\cdot v'}A(x)}{g(x)} dx\\
& = \int_\R \left(\sum_{n=0}^\infty a_n h_n(x) g(x)\right)\left(\sum_{n'=0}^\infty a_n' (v\cdot v')^{n'} h_n'(x) g(x)\right)/g(x) dx\\
& = \int_\R \sum_{n,n'=0}^\infty a_n a_n' (v\cdot v')^{n'} h_n(x) h_n'(x) g(x) dx\\
& = \sum_{n=0}^\infty a_n^2 (v\cdot v')^n\\
& = 1 + \sum_{n=1}^k a_n^2 (v\cdot v')^{n} + \sum_{n=k+1}^\infty a_n^2 (v\cdot v')^n \;.
\end{align*}
Therefore,
\begin{align*}
|\chi_{N(0,I)}(\p_v, \p_{v'})|
& \leq O(\nu^2)\sum_{n=1}^k 2^{O(n)} \, |v\cdot v'|^{n} + |v\cdot v'|^{k+1}\sum_{n=0}^\infty a_n^2\\
& \leq \nu^2 + |v\cdot v'|^{k+1}\chi^2(A, N(0,1)) \;.
\end{align*}
This completes our proof.

\section{Proof of Lemma~\ref{lem:disc-Gaussian-mm}}\label{app:disc-Gaussian}

We consider the Fourier transform of $G_{\sigma,\theta}$.
Note that $G_{\sigma,\theta}$ is the pointwise product of $G$ with a mesh of delta-functions.
Therefore, its Fourier transform is the convolution of their Fourier transforms.
The Fourier transform of $G$ is $\sqrt{2\pi}G$. The Fourier transform of the net of delta-functions $f(\xi) = \sum_{n\in\Z}\delta(\xi-n/\sigma) e^{2\pi i \theta \xi}$. Thus, we have that the Fourier transform of $G_{\sigma,\theta}$ at $\xi$ is
$$
\sum_{n\in \Z} \sqrt{2\pi}g(\xi+n/\sigma)e^{-2\pi i n \theta/ \sigma} \;.
$$
The $t^{th}$ moment of a pseudodistribution is proportional to the value of the $t^{th}$ derivative of its Fourier transform at $\xi=0$. For $G$, this is $\sqrt{2\pi}g^{(t)}(0)$. 
For $G_{\sigma,\theta}$, it is equal to this term plus
$$
\sum_{n\in \Z, n \neq 0}\sqrt{2\pi}g^{(t)}(n/\sigma)e^{-2\pi i n \theta/ \sigma} \;.
$$
Computing the derivative of $g$ using Cauchy's integral formula (integrating around a circle of radius $1/(2\new{\sigma})$ centered at $n/\new{\sigma}$), we find that
$$
|g^{(t)}(n/\sigma)| = t! O(\sigma)^{t} \exp(-\Omega(n/\sigma)^2).
$$
Taking a sum over $n$ yields our result.

\end{document}